 \newif\ifprivate\privatetrue %
\newif\ifuc\uctrue           %

\documentclass{article}

\usepackage[utf8]{inputenc} %
\usepackage[T1]{fontenc}    %
\usepackage[USenglish]{babel}
\usepackage{csquotes}

\usepackage[position,nonatbib,final]{neurips_2026}
\usepackage{url}            %
\usepackage{booktabs}       %
\usepackage{amssymb}       %
\usepackage{nicefrac}       %
\usepackage{microtype}      %
\usepackage{xcolor}         %
\usepackage{dsfont}

\usepackage{amsmath}
\usepackage{amsthm}
\usepackage{mathtools}
\usepackage{thmtools}
\usepackage{geometry}
\usepackage{verbatim}
\usepackage[pdftex]{graphicx}
\usepackage{siunitx}
\usepackage{wrapfig}

\usepackage{hyperref}       %
\usepackage[noabbrev,capitalize]{cleveref}
\crefname{appendix}{Appendix}{Appendices}
\Crefname{appendix}{Appendix}{Appendices}

\usepackage{tikz}

\usepackage[style=alphabetic,natbib,maxbibnames=30,backend=biber,abbreviate=false,maxcitenames=2,useprefix=true]{biblatex}
\renewbibmacro{in:}{}
\DeclareNolabel{%
  \nolabel{\string\p{Dash}}%
}

\ifprivate\def\todo#1{{\color[rgb]{.6,.3,.0}\it[[ToDo:\ #1]]}}\else\def\todo#1{}\fi %
\ifprivate\def\note#1{{\color[rgb]{.5,.5,.0}\it[[Note:\ #1]]}}\else\def\note#1{}\fi %

\declaretheorem[]{theorem}
\declaretheorem[sibling=theorem]{definition}
\declaretheorem[sibling=theorem]{proposition}

\declaretheorem[sibling=theorem]{corollary}

\newcommand{\kl}[2]{D_\mathrm{KL}\left(#1\mathrel\Vert#2\right)}
\newcommand{\prob}{\mathbb P}
\DeclareMathOperator{\expect}{\mathbb E}
\let\E\expect
\newcommand{\dist}{\mathcal D}

\newcommand{\reals}{\mathbb R}
\newcommand{\nats}{\mathbb N}
\newcommand{\ints}{\mathbb Z}
\newcommand{\bits}{\mathbb B}

\newcommand{\starrel}[2]{%
  \mathrel{\smash{\mathchoice
    {\stackrel{#1}{#2}}%
    {\scriptstyle\stackrel{#1}{#2}}%
    {\scriptstyle\stackrel{#1}{#2}}%
    {\scriptscriptstyle\stackrel{#1}{#2}}%
  }}%
}
\newcommand{\eqp}{\starrel{+}{=}}
\newcommand{\leqp}{\starrel{+}{\le}}
\newcommand{\geqp}{\starrel{+}{\ge}}
\newcommand{\eqx}{\starrel{\times}{=}}
\newcommand{\leqx}{\starrel{\times}{\le}}
\newcommand{\geqx}{\starrel{\times}{\ge}}
\newcommand{\cF}{\mathcal F}
\newcommand{\cH}{\mathcal H}
\newcommand{\cW}{\mathcal W}
\newcommand{\cX}{\mathcal X}
\newcommand{\cY}{\mathcal Y}
\newcommand{\indic}{\mathds{1}}
\newcommand{\KM}{\mathit{KM}}
\newcommand{\MI}{\mathit{MI}}
\newcommand{\tp}{^\mathsf{T}}
\DeclareMathOperator{\Rad}{Rad}
\DeclareMathOperator{\VCdim}{VCdim}
\DeclareMathOperator*{\argmin}{argmin}

\newcommand{\xor}{\mathbin{\mathtt{xor}}}

\newcommand{\httpsurl}[1]{\href{https://#1}{\nolinkurl{#1}}}

\DeclarePairedDelimiter{\abs}{\lvert}{\rvert}
\DeclarePairedDelimiter{\norm}{\lVert}{\rVert}
\DeclarePairedDelimiter{\floor}{\lfloor}{\rfloor}

\DeclarePairedDelimiterX{\inner}[2]{\langle}{\rangle}{#1,#2}

\title{Understanding Generalization\\Requires Universal Induction}

\author{%
  Aram Ebtekar\\
  AIXI Labs \\
  \!\!\!\!\texttt{aramebtech@gmail.com}\!\!\!\! \\
  \And
  Marcus Hutter \\
  Google DeepMind \& ANU \\
  \httpsurl{www.hutter1.net} \\
  \And
  Danica J.\ Sutherland \\
  UBC \& Amii \\
  \texttt{dsuth@cs.ubc.ca} \\
}

\makeatletter
\if@preprint \renewcommand{\@noticestring}{Preliminary version as of \today. Comments welcome.} \fi
\makeatother

\begin{document}

\maketitle

\begin{abstract}
Classical statistical theory is insufficient to explain the successes of general-purpose AI models, because it depends on handcrafted inductive biases that it cannot justify. No Free Lunch (NFL) theorems force any learner that beats chance on some environments to underperform on others. We might hope that past experience informs which environments to expect, but NFL applies equally to meta-learning. Thus, any method that makes meaningful predictions necessarily begins with an inductive bias external to the data. Choosing to bias toward short programs yields Solomonoff induction (SI), whose performance is competitive against all computable learners -- albeit up to ``constants'' that become large when comparing against specialized methods that exploit background information. We therefore relativize SI to an \emph{information vantage point}, biasing toward short programs with access to all preexisting information. This reframes the inductive bias: instead of seeking some absolute notion of simplicity, we favor \emph{accessibility} with respect to our vantage point. An algorithm can only outpredict the relativized SI to the extent that its code contains additional information about the data, and no algorithm can generate such information. While SI is incomputable and hence not a practical algorithm, it provides a formal optimum for inference in the limit of infinite compute, and there is evidence to suggest that frontier AI systems roughly approximate it. Thus, the only known answer to meta-NFL is rooted in algorithmic information theory, which we should expect to play a fundamental role in explaining the generalization behavior of modern (and future) AI systems.
\end{abstract}

\section{Introduction} \label{sec:intro}

Machine learning extrapolates from seen data to predict unseen data. Deductive logic alone does not force any relationship between the seen and unseen data; our
\emph{inductive} inferences necessarily depend on structural assumptions about the data, called \emph{inductive bias}.
That the Sun has risen each day so far
does not logically imply that it will do so tomorrow,\footnote{%
One day, it may engulf the Earth as a red giant, or it may shrink into a white dwarf as the Earth drifts away \citep{Esseldeurs_2026}.
Neither situation is \emph{logically} necessary;
it remains deductively possible that the rules of gravity will change tomorrow.}
but we would find life difficult to navigate
without some confidence in tomorrow's sunrise.
This is often justified via Occam's razor, preferring simple explanations -- but what is a ``simple'' explanation, and why should we prefer one?

Theoretical analyses of learning problems often
assert some form of ``simplicity.''
To give an example, in one-dimensional density estimation,
one can assume there is an underlying density function
whose (say) third derivative is $1$-Lipschitz,
and then prove that an algorithm successfully estimates such densities
\citep[e.g.][Section 1.2]{tsybakov}.
We might be satisfied to accept that assumption; it ``feels sensible.''
As we apply machine learning to increasingly general problems, however,
the required assumptions become more complex and less \emph{a priori} defensible.
To obtain a theoretical account of why learning works,
one must argue for a particular choice of inductive bias.

This choice is commonly justified by pointing to earlier instances in which the proposed inductive bias was successful. In other words, a part of the learning algorithm is itself inductively meta-learned \citep{schmidhuber1987evolutionary,thrun-pratt,hospedales2022meta} from a lifetime of experience; in turn, our lifetime learning algorithm is informed by billions of years of Darwinian evolution \citep{naoki-peter:olfaction,goldfeder2026ai}. 
This suggests a promising case study: since it seems implausible that our microbial ancestors billions of years ago ``knew'' much, evolution may serve as a model of learning from a state of true ignorance.

Naive attempts to build a theory of learning from true ignorance, however, run into No Free Lunch theorems.
In statistical learning terms,
we have no \emph{a priori} way to guarantee low generalization error.
Using a model class for which uniform convergence holds guarantees
that our algorithm does not substantially overfit (i.e.\ find a predictor that does much better on training data than on fresh data),
but cannot guarantee
that it does not underfit (i.e.\ have high training error).
Whether these explanations of non-overfitting even apply to
practical neural networks
remains unclear after much investigation;
there are meaningful negative results \citep{neyshabur:inductive,zhang:rethinking,vaish:uniform-failures,jiang:fantastic,gastpar:no-fantastic},
but also some positive indications
\citep{dziugaite2017computingnonvacuousgeneralizationbounds,zhou:nonvac-imagenet,yang:exact-gap,zhou:moreau,lotfi2024non,lotfi2024unlocking}.
If they do apply, how do we know that practical neural networks will not underfit to a new problem?
Alternatively, ``soft-preference'' techniques
\citep[see][]{wilsonposition},
such as structural risk minimization \citep{VapnikChervonenkis1974,Vapnik1991PRM} in a universal hypothesis class,
can ensure that the model class will not underfit. The drawback is that they cannot
guarantee non-overfitting
unless the soft preference -- the algorithm's inductive bias -- happens to be suitable for the problem. It seems we can provably prevent overfitting or underfitting, but not both.
Despite No Free Lunch, it is an empirical fact that life and machines alike are often able to learn well.
One might ask: why care about theory?
\Citet{kawaguchi:generalization}
identify three ``practical roles''
for generalization theory:
to provide
(1) guarantees on expected risk,
(2) guarantees on the generalization gap,
and (3) insights to guide model selection.
Learning theory addresses role (2)
for ``small'' hypothesis classes,
showing that test error is near training error.
(The use of a validation set can, in some ways, make any class effectively small \citep[Section 4]{kawaguchi:generalization}.)
As to roles (1) and (3),
however, No Free Lunch theorems present extreme challenges.
We can only say anything if we assume the function we are trying to learn
lies in some ``concept class.''
Why should it?
This is both a philosophical challenge
and a barrier for understanding what AI systems can -- or should -- do.

In \cref{sec:nfl}, we expand on how No Free Lunch theorems preclude the possibility of learning without an inductive bias -- and, in fact, a meta-inductive bias, which itself cannot be learned via experience nor evolution \citep{wolpert2023implications}. Theoretical explanations of learning are possible by assuming structure \emph{a priori}; however, today we see generalist AI models deployed on increasingly arbitrary tasks. What sort of inductive bias would account for their generalization performance on new problems?

With the advent of the attention mechanism \citep{vaswani2023attentionneed} and reasoning modes \citep{openai2024learningtoreason,deepseek-r1,snell2025scaling},
leading model classes are rapidly moving toward supporting general, even universal forms of computation \citep{attn-turing-complete,expressive-cot,pencil,li:constant-size-turing}.
Solomonoff induction  \citep{solomonoff1964formal} is a logical extreme in this direction,
corresponding to Bayesian inference with a weighted mixture prior over all possible computer programs. Due to the halting problem \parencites{turing1937computable}[Section 4.2]{sipser}, Solomonoff induction is not computable, but it provides an idealization of learning in the limit of unbounded computation; it is a central object in algorithmic information theory \cite{li2019introduction,hutter2024introduction}.
However, it necessarily chooses prior weights for each of the infinitely many possible computer programs. What justifies that prior?

Inspired by \citet{schurz2019hume},
we argue in \cref{sec:accessibility} that while there is no absolute winner among the class of all mathematically well-defined predictors, most of these predictors are inaccessible to us in practice.
This motivates an inductive bias in favor of \emph{accessibility} from a given \emph{information vantage point}, which \cref{sec:solomonoff} formalizes as a universal computer equipped with an oracle.
We prove that Solomonoff induction from this vantage point is optimal, in the sense that the only way to do better is to hardcode additional information about the dataset. Formally, \cref{thm:oraclesolregret} says that oracle Solomonoff induction's regret
against any predictor is bounded by that predictor's
(algorithmic) mutual information with the data -- the extent to which it ``knew the answer'' in advance.

In \cref{sec:nn-gen},
we discuss preliminary evidence supporting this point of view as a reasonable 
idealized description of modern AI systems.
\Cref{sec:infproperties,sec:osiproperties} cover additional details of oracle Solomonoff induction, \cref{sec:nfl-slt} discusses what some key results in statistical learning theory (do not) say in relation to No Free Lunch,
and \cref{sec:boltmzann-brains} gives implications for the problem of Boltzmann brains in cosmology.
Overall, we argue that \textbf{the only known answer to the meta-No Free Lunch problem is based on algorithmic information theory, and that this point of view gives a viable strategy for explaining the generalization behavior of modern (and future) AI systems}.
\section{No Free Lunch in learning theory}
\label{sec:nfl}

No Free Lunch theorems have an extensive history, dating back to Hume's problem of induction \cite{hume1748enquiry,putnam1963degree,goodman1983fact,wolpert1996lack,sloman2005problem,adam2019no,belot2020absolutely,sterkenburg2021no,wolpert2023implications}. Here, we focus on a straightforward formulation for finite input and output sets, whose proof is immediate by counting.

\begin{proposition}
\label{lem:nfl}
Consider a function $f^*:\ints_n\rightarrow\ints_k$, and let $D_\mathrm{train}\subset\ints_n$ be a set of $|D_\mathrm{train}|=m<n$ observations. Then, there are exactly $k^{n-m}$ functions $f:\ints_n\rightarrow\ints_k$ consistent with these observations, in the sense that $f(x)=f^*(x)$ for all $x\in D_\mathrm{train}$.
At every unobserved $x'\in\ints_n\setminus D_\mathrm{train}$, for every $y' \in \ints_k$, exactly $k^{n-m-1}$ of the functions consistent with $f^*$ on $D_\mathrm{train}$ have $f(x')=y'$.
\end{proposition}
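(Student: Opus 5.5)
The plan is a direct counting argument built on a bijection between consistent functions and assignments on the unobserved points. Set $U = \ints_n \setminus D_\mathrm{train}$, so $|U| = n-m \ge 1$ because $m<n$. A function $f:\ints_n\to\ints_k$ is just a tuple of values $(f(x))_{x\in\ints_n}$, and these values can be chosen independently. Consistency with $f^*$ on $D_\mathrm{train}$ fixes the coordinates indexed by $D_\mathrm{train}$ and says nothing about the others.

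For the first claim, I will exhibit the bijection explicitly. Send each consistent $f$ to its restriction $f|_U\in\ints_k^U$. In the other direction, send each $g\in\ints_k^U$ to the function that agrees with $f^*$ on $D_\mathrm{train}$ and with $g$ on $U$. These maps are mutually inverse, since $D_\mathrm{train}$ and $U$ partition $\ints_n$. The number of consistent functions is therefore $|\ints_k^U| = k^{n-m}$.

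For the second claim, fix $x'\in U$ and $y'\in\ints_k$. Under the same bijection, the consistent functions with $f(x')=y'$ correspond exactly to the maps $g:U\to\ints_k$ with $g(x')=y'$. Such a $g$ is determined by its values on $U\setminus\{x'\}$, which are free, so there are $k^{n-m-1}$ of them. The exponent $n-m-1$ is nonnegative because $m<n$, so the count makes sense even when $|U|=1$, where it equals $1$. As a cross-check, summing $k^{n-m-1}$ over the $k$ possible values $y'$ recovers the total $k^{n-m}$.

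There is no real obstacle here. The only care needed is to keep the two edge conditions visible: the assumption $m<n$, which guarantees that an unobserved $x'$ exists and that the exponent is nonnegative, and the fact that $x'\notin D_\mathrm{train}$, so $f(x')$ is genuinely unconstrained by the observations.
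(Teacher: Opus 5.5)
Your proof is correct and takes essentially the same approach as the paper, which gives no written proof and only says the result ``is immediate by counting.'' Your bijection between the consistent functions and the maps $\ints_n\setminus D_\mathrm{train}\to\ints_k$ spells out that count, and it correctly records that $m<n$ keeps the exponent $n-m-1$ nonnegative.
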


As a consequence of \Cref{lem:nfl}, no matter how a prediction algorithm uses the training dataset $(x,f^*(x))_{x\in D_\mathrm{train}}$ to predict $f^*(x')$, it will fail for most possible target functions $f^*$.\footnote{Note that this applies to all algorithms that depend only on the training data, regardless of how well they fit it, because such algorithms cannot distinguish between the $k^{n-m}$ target functions that produce the same training labels.} To better appreciate the situation, let's examine how Bayesian and frequentist inference -- representing two major schools of thought in statistical methodology -- each fall short without an inductive bias.

\paragraph{Bayesian and frequentist no free lunches}
To avoid inductive bias, suppose the Bayesian statistician begins with a uniform prior over all $k^n$ possible functions. After observing the training dataset $(x,f^*(x))_{x\in D_\mathrm{train}}$, the posterior belief is a uniform distribution on the $k^{n-m}$ functions consistent with the data. No generalization is achieved, because the belief on non-training points $x'\notin D_\mathrm{train}$ remains uniform, assigning probability
${k^{n-m-1}} / {k^{n-m}} = {1}/{k}$
to each possible value of $f^*(x')$.

Similarly, suppose the frequentist statistician samples a random hypothesis $\hat f$ uniformly\footnote{Alternatively, one could choose an ``arbitrary'' consistent hypothesis, which might perform better (or worse) than the random choice. Typical frequentist analyses therefore assume either a random choice, a \emph{worst-case} choice (naturally only worse), or perhaps some particular tie-breaking rule (which would correspond to a choice of some non-uniform inductive bias).}
among the $k^{n-m}$ functions that satisfy $\hat f(x)=f^*(x)$ for all $x\in D_\mathrm{train}$, to test on a hold-out set $D_\mathrm{test}\subsetneq\ints_n\setminus D_\mathrm{train}$. Since $(\hat f(x))_{x\in D_\mathrm{test}}$ is uniformly distributed, the test rejects $\hat f$ with overwhelming probability. We can follow up by sampling new hypotheses to test, but if all we do is sample at random until we obtain satisfactory performance on $D_\mathrm{test}$, the resulting hypothesis will fail to generalize, remaining uniformly distributed on all $x'\notin D_\mathrm{train}\cup D_\mathrm{test}$.

We note an important difference between the two methodologies. The Bayesian statistician sets an explicit inductive bias at the start, via a prior. Thereafter, Bayes' rule determines optimal posterior inferences. This makes Bayesian methods readily automatable, perhaps contributing to their historical popularity in AI research. On the other hand, the frequentist statistician delivers an inductive bias in an online fashion, providing guidance in the form of hypotheses to test one after another. This is more amenable to experimental science: when human scientists are unable to encode their beliefs into an explicit prior, they benefit from the ability to make a sequence of guesses.

In these naive forms, neither methodology is self-contained: they are formally agnostic to the question of which inductive bias is most appropriate, outsourcing this choice to human design. With a good Bayesian prior or a good method of hypothesis generation, much better inferences become possible.

\paragraph{A concrete example}
\newcommand{\n}{{19}}
\newcommand{\nmthree}{{16}}
\newcommand{\nmfour}{{15}}
Let $n = k = \n$, so that we want to learn a function $f^*:\ints_\n\rightarrow\ints_\n$. We observe $f^*(4)=4$, $f^*(8)=8$, and $f^*(11)=11$. What is $f^*(0)$? Without further constraints, it can be anything: by \Cref{lem:nfl}, there are $\n^{\nmthree}$ (about 288 quintillion) functions compatible with the data, with exactly $\n^{\nmfour}$ (15 quintillion) of them assigning each possible value in $\ints_\n$ to $f^*(0)$.

Nonetheless, it feels intuitive to guess the function $f_1(x):=x$, which predicts $f_1(0)=0$. What makes $f_1$ a better guess than, say,
\[f_2(x)
:= x + (x-4)(x-8)(x-11)
= x^3 - 4x^2 - 6 x + 9
\pmod{\n},\]
which also matches the data but predicts $f_2(0)=9$? We might think of $f_2$ as representing a weaker inductive bias, in the sense of being derived by fitting the data to a large hypothesis class
\[\cH_0:=\left\{x \mapsto a_3x^3+a_2x^2+a_1x+a_0 \pmod{\n} \,:\, a_i\in\ints_\n\right\}.\]
$\cH_0$ contains $\n^4 = 130{,}321$ elements, enough to \emph{overfit} our small dataset. Hence, observing $f_2(x)=f^*(x)$ for $x\in D_\mathrm{train}=\{4,8,11\}$ should not give us confidence that $f_2$ will continue to fit future data.
Since $f_1$ belongs to the much smaller hypothesis class
\[\cH_1:=\left\{x \mapsto a_1x+a_0\pmod{\n} \,:\, a_i\in\ints_{\n}\right\},%
\]
of size only $\n^2 = 361$,
we expect it not to drastically overfit. The issue with this analysis is that $f_2$ is a member of not only $\cH_0$, but also some smaller classes,
such as
the following, also of size $361$:
\[\cH_2:=\left\{ x \mapsto x^3-4x^2+a_1x+a_0\pmod{\n} \,:\, a_i\in\ints_{\n}\right\}
.\]

Choosing $\cH_2$ may feel like ``cheating,'' since we constructed it by taking the leading terms $x^3-4x^2$ from $f_2$, which depended on the particular observations on $D_\mathrm{train}$.%
\footnote{More extremely, we could pick (as do many students first exposed to learning theory) the singleton hypothesis class $\{ f_2 \}$.}
But, regardless of how we the authors selected it,
what is it
for you the reader
that makes $\cH_2$ ``feel tailored'' to the training data
while $\cH_1$ does not? If we remove our intuitive preference for $f_1$ and $\cH_1$,
the situation is formally symmetric. If the true function is $f_1$, then fitting to $\cH_1$ performs well while fitting to $\cH_2$ performs poorly;
if the true function is $f_2$, the situation is reversed.

\begin{wrapfigure}[19]{r}{0.3\textwidth}
  \centering
  \vspace*{-3ex}
  \begin{tikzpicture}
    \fill (-1.2,0) circle (1.5pt);
    \node[below] at (-1.2,0) {$f_1$};
    \draw (-1.2,0) circle (0.5);
    \node at (-1.2,0.7) {$\mathcal{H}_1$};

    \fill (1.2,0) circle (1.5pt);
    \node[below] at (1.2,0) {$f_2$};
    \draw (1.2,0) circle (0.5);
    \node at (1.2,0.7) {$\mathcal{H}_2$};

    \draw (0,0.6) circle (0.5);
    \node at (0,1.3) {$\mathcal{H}_3$};

    \draw (0,-0.6) circle (0.5);
    \node at (-0.5,-1.1) {$\mathcal{H}_4$};

    \draw (0,0) ellipse (2 and 1.5);
    \node at (0.8,1.6) {$\mathcal{H}_0$};
  \end{tikzpicture}
  \caption{Both the ``simple'' function $f_1$ and the ``complex'' function $f_2$ belong to small hypothesis classes.
  The same can be said of any function in the large class $\cH_0$.
  To generalize well, we should choose a class that is small while containing (a good approximation of) the true function.}
  \label{fig:circles}
\end{wrapfigure}
We visualize the space of possible functions in \cref{fig:circles}.
Learning theory \citep{vapnik-chervonenkis-1968,valiant:pac,ssbd,mrt,ltfp}, roughly,
says that learning from a
``small'' hypothesis class (such as $\cH_1$ or $\cH_2$)
probably yields hypotheses with a small generalization gap:
their training performance is about the same as their test performance.
Large classes, such as $\cH_0$, lack this property:
when using them, we may overfit the training data.
Small classes are vulnerable to a different problem: if they contain no good approximation to the true function,
then we \emph{underfit},
i.e.\ the training performance is itself poor.
Indeed, $\cH_2$ underfits most random samples (of three or more points) labeled by $f_1$, and conversely $\cH_1$ underfits most samples labeled by $f_2$.

Learning theory recommends choosing a small class to prevent overfitting, but does not tell us \emph{which} small class prevents \emph{underfitting}. No Free Lunch says that no algorithm generalizes well on most functions, so no hypothesis class simultaneously satisfies both constraints for most functions. We are forced to favor certain kinds of functions \emph{a priori}. Why do we feel inclined to choose $\cH_1$ over the myriad equally small classes such as $\cH_2$? Low polynomial degree seems like a reasonable heuristic, but is not a general rule; it is disastrous for fitting other ``simple'' functions such as $f_3(x) := \floor{x/10}$.

\paragraph{Meta-learning an inductive bias}
Vitally, past experience cannot serve as a root justification for inductive biases:
even assuming (generously) that we identify the full labeling function for every problem we encounter,
meta-learning (learning to learn) is itself a learning problem \citep{wolpert2023implications}.

\begin{corollary}[Meta-No Free Lunch] \label{thm:meta-nfl}
    Take $N$ prediction problems, with the $i$th revealing a function $g_i:\ints_n \to \ints_k$.
    After observing $g_1,\ldots,g_m$ with $m<N$, there remain $k^{(N-m)n}$ consistent meta-mappings $i\mapsto g_i$, exactly $k^{(N-m-1)n}$ of which predict each of the $k^n$ possible choices for $g_{m+1}$.
\end{corollary}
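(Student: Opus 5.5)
The plan is to reduce the meta-level statement directly to \cref{lem:nfl} by viewing a meta-mapping as an ordinary function on a finite domain with a finite codomain. Let $\mathcal G := \ints_k^{\ints_n}$ be the set of all functions $\ints_n \to \ints_k$; it has exactly $k^n$ elements, so we fix a bijection $\mathcal G \cong \ints_{k^n}$. A meta-mapping $i \mapsto g_i$ is then a function $F:\ints_N \to \ints_{k^n}$, where we index the $N$ problems by $\ints_N$. Observing $g_1,\ldots,g_m$ amounts to observing $F$ on a training set $D_\mathrm{train}$ of size $m<N$ (the first $m$ indices). Consistency of a candidate meta-mapping with the observations means agreement with $F$ on $D_\mathrm{train}$, exactly as in \cref{lem:nfl}.

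Applying \cref{lem:nfl} with $(n,k)$ replaced by $(N, k^n)$ then gives the conclusion. The number of consistent meta-mappings is $(k^n)^{N-m} = k^{(N-m)n}$. At the unobserved index $m+1 \notin D_\mathrm{train}$, which exists since $m<N$, each of the $k^n$ possible values $g \in \mathcal G$ is predicted by exactly $(k^n)^{N-m-1} = k^{(N-m-1)n}$ of them. Translating back through the bijection, these values are precisely the $k^n$ possible choices for $g_{m+1}$.

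Alternatively, one could count directly: a consistent meta-mapping is determined by freely choosing $g_{m+1},\ldots,g_N \in \mathcal G$, giving $|\mathcal G|^{N-m}$ options. Fixing $g_{m+1}$ leaves $|\mathcal G|^{N-m-1}$ options.

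There is no real obstacle here. The only point requiring care is the bookkeeping: identifying $\mathcal G$ with $\ints_{k^n}$, and checking that the indexing of problems as $1,\ldots,N$ matches the $\ints_N$ domain used in \cref{lem:nfl}. Both are harmless relabelings.
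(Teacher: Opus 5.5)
Your proposal is correct and matches the paper's proof: encode each $g:\ints_n\to\ints_k$ as an element of $\ints_{k^n}$, view the meta-mapping as a function on $\ints_N$, and apply \cref{lem:nfl} with $(n,k)$ replaced by $(N,k^n)$. The paper handles the indexing by setting $f(i):=\operatorname{enc}(g_{i+1})$, so that $D_\mathrm{train}=\ints_m$ and the unobserved point is $x'=m$; in your own $\ints_N$ indexing, $g_{m+1}$ likewise sits at index $m$, not at index $m+1$.
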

\begin{proof}
    Uniquely label each function $g:\ints_n \to \ints_k$ by a number $\operatorname{enc}(g)\in\ints_{k^n}$.
    Now consider the problem of learning the meta-function $f:\ints_{N}\rightarrow\ints_{k^n}$ given by $f(i) := \operatorname{enc}(g_{i+1})$. The result follows from applying \cref{lem:nfl} with $n,k,D_\mathrm{train},x'$ replaced by $N,k^n,\ints_m,m$, respectively.
\end{proof}

As suggested in \cref{sec:intro}, let's consider how Nature might learn $f^*=f_1$ in a toy model of Darwinian evolution. Imagine a population of creatures, each genetically wired to compute some function $f:\ints_{\n}\rightarrow\ints_{\n}$. For simplicity, we assume an infinite population, providing enough initial variation so that no mutations are needed. On each day, the creatures are tested on a single input $x$; a creature $f$ survives iff $f(x) = f_1(x) := x$. On the first three days, suppose the inputs are $4,8,11$. Then, all surviving creatures satisfy $f(4)=4,f(8)=8,f(11)=11$. Do the creatures adapt under selection? Specifically, given a new input on day four, is the survival rate better than chance?

The answer depends on the initial population. This setting is formally identical to our Bayesian analysis above, with natural selection taking the role of Bayesian updates from a prior (an initial population) to a posterior (a subsequent population).
If the initial creatures are uniformly sampled from the set of all functions, then the generalization performance -- the survival rate on day four -- is only as good as chance.
Without an explicit nonuniform bias at the outset,
we can never learn one.

\section{Accessibility as a universal inductive bias} \label{sec:accessibility}
In any realistic evolutionary setting, the initial distribution would not be uniform over all possible functions. In biology, perhaps even in the primordial soup where life began, molecules can arrange themselves into simple computing machines \citep{johnston2022symmetry}. Thus, functions that can be computed by simple circuits, such as $f_1(x):=x$ or $f_3(x):=\floor{x/10}$, are sampled more frequently than others. This results in good generalization performance when testing on such functions.

The inductive bias toward simplicity is commonly known as Occam's razor, after the medieval philosopher William of Ockham.
Its mathematical formalization as Solomonoff induction was proposed by \citet{solomonoff1964formal}, and improved by \citet{levin1971thesis}. \citet{rathmanner2011philosophical} advocate for it as a ``gold standard'' for induction, although this view is challenged by \citet{mcgregor2014natural,herrmann2020pac,neth2023dilemma,sterkenburg2026solomonoff}.

One major issue is that when simplicity is measured by description length, it depends on the choice of description language.
Indeed, \emph{any} prior distribution $\nu(x)$ can be interpreted as a simplicity bias, since
a self-delimiting code of length $\lceil\log\frac{1}{\nu(x)}\rceil$ is shortest at the peaks of $\nu$. Even Solomonoff's \emph{universal} measures can be distributed arbitrarily on sequences of any prescribed finite length \citep{leike2015bad}. Thus, 
``simplicity'' appears to be a catch-all term for any possible preference we have.

Instead of seeking simplicity in absolute terms, \citet{schurz2019hume} argues that reality forces a meta-inductive bias toward inductive methods that are \emph{accessible} to us.
A method may be accessible because it is simple (e.g.\ ``always predict \texttt{1}''), or because it is established (e.g.\ 
``run an online AutoML system to do evolutionary hyperparameter search over neural network architectures optimized by Adam, and use the predictor with the best cross-validation score'').
A method may be \emph{inaccessible} because it requires a specific random seed, or because it hardcodes unseen data (e.g.\ the test set, or next year's sporting results).
We would like to ensure we don't do much worse than any methods we could plausibly use (i.e.\ have low regret against these predictors \citep{CesaBianchi_prediction_2006});
we are unbothered if our predictions are worse than those of a clairvoyant method we could never implement.
We will see that
a bias toward accessibility
ensures predictions competitive with all accessible predictors; any predictor that does substantially better would necessarily be inaccessible to us.

Learning algorithms are themselves defined by (pseudo)code, which is a form of information.
In general, we measure the inaccessibility of a piece of information (i.e.\ a string) by the length of the shortest ``generalized pointer'' (i.e.\ another string) to it.
For example, some strings that point to the text of Shakespeare's \emph{Hamlet} include, in order of decreasing length:
(1) the full text verbatim;
(2) the text in compressed \texttt{gzip} format, if the reader has e.g.\ \texttt{gzcat};
(3) the command ``\texttt{curl -L }\httpsurl{flgr.sh/txtfssHamtxt}'', if the reader has a Unix shell with Internet access;
(4) the string ``Text of Hamlet'', if the reader has memorized the play or knows where to find a copy.
Clearly, the accessibility of information is highly dependent on context. We refer to this context, representing the totality of information in our possession, as our \emph{information vantage point}.

For concreteness, take the point of view of an AI program that lives inside a computer, interfacing with the outside world via a Bash shell. Its vantage point is defined by the connected storage and network devices (which include its own code), as well as the Bash interface. Together, these systems define a mapping from Bash commands to their outputs. As a result, some functions are accessible to the AI via short commands
(e.g.\ simple calls to installed software libraries, or pretrained language models downloadable from HuggingFace),
while others are not
(e.g.\ someone else's private files, or next year's stock market movements).

The AI does not choose its vantage point;
its (human or AI) designer does. Assuming a physical Church-Turing thesis, which states roughly that all physical processes are computable \citep{wolpert2024implications}, we can likewise model the designer as an algorithmic system with an exogenous information vantage point. To maximize the AI's predictive ability, the designer can grant the AI access to all of its information, some of which may happen to be usefully correlated with the unseen test data; however, the designer cannot \emph{create} additional useful information (see \cref{thm:noinfoexnihilo}).

Discussions of what the designer can or cannot do require some care:
if the designer is itself an algorithm, then the outcome is predetermined \citep{lewis1981we}. On the other hand, if we imagine counterfactually changing the AI program, then its predictions can be optimized by simply hardcoding the test data. To limit cheating by hardcoding, we take the middle ground of counterfactually optimizing only over very short programs. In order to model maximal intelligence, we run these short programs with unlimited time and space, enabling them to search for and retrieve larger codebases and datasets from the fixed information vantage point.

In fact, these short programs exactly capture the set of predictors that a designer can write with non-negligible probability, by randomly writing code like a monkey on a typewriter. One might object that it is also possible to sample some long programs with non-negligible probability, using either biased sampling, algorithmic search, or preexisting knowledge. However, all such cases are already covered by short programs at the fixed vantage point: biased or searchable samples can be compressed, and preexisting information can be downloaded with short commands.

The same reasoning applies more broadly than just programs running on silicon,
such as in our model of Nature learning via Darwinian evolution.
In the primordial soup, the most accessible functions are those which occur most frequently as molecules are stochastically rearranged
\citep{deutsch2013computation,ozkural2015ultimate,janzing2018does,kolchinsky2020thermodynamic,ebtekar2025foundations}. As evolution selects for genetic information relevant to survival, future generations start from a richer information vantage point.
In addition to this genetically hardwired information,
we further enrich our vantage points through education, life experience, cultural context, and access to various repositories of information.

\cref{sec:algprob} will formalize the information vantage point as a universal computer $U$ equipped with an oracle $q$, which can be queried for background information. If $U$ represents a Bash shell on a computer, and $q$ represents a snapshot of the local storage devices and World Wide Web, then a short program $p$ can download a Python interpreter, packages, open-source repositories, and datasets, and adapt them with a custom Python script, to make a state-of-the-art predictor.\footnote{%
Strictly speaking, some of this code is unnecessary: the fact that AI researchers and engineers wrote it using mathematical reasoning and old datasets shows that it is in principle derivable from a poorer vantage point, such as one from the year 1960.
The added contribution of our modern vantage point, which a learner from 1960 should not be expected to compete against, includes chaotic (unpredictable, even in principle)
trends in arts, culture, and politics; as well as information from new sensing instruments, such as astronomical data from modern telescopes.
The Transformer architecture is readily accessible from our vantage point. From the vantage point of 1960 or ancient history, it appears inaccessible; however, given infinite computation to search for good predictors, perhaps even Plato could have discovered the Transformer. It remains an open question how much
of the architecture is a historical accident, and how much follows from the generic task of natural language prediction. Future work may benefit from studying time-constrained versions of accessibility; see \citep{garrabrant2017,Xu2020A,finzi2026entropy,voitovych2026learning}.
} Thus, the central issue of which description language to adopt is resolved by granting access to all known languages via $q$.

In the spirit of No Free Lunch theorems, we should note that more data is not always better. The legendary Library of Babel \citep{borges:babel} is said to contain all possible 410-page books. This gigantic library is useless for the purposes of accessibility, since a set of directions pointing to one of its books would be as complex as the actual text
(unless the library is designed to privilege certain books, e.g.\ placing \emph{Hamlet} on an altar while leaving most other books on unremarkable shelves).
There is a short program that prints the entire Library of Babel, including the book consisting of the first 410 pages of our test data, but that book by itself cannot necessarily be accessed by a short program.

\section{Solomonoff induction} \label{sec:solomonoff}

We now develop the formal theory of fully general optimal online learning with respect to an information vantage point.
Our results are self-contained, deviating from standard presentations only where needed to make the vantage points explicit.

\subsection{Bayesian mixture predictors}
\label{sec:bayesian}

To balance simplicity with generality, we focus on optimizing the log loss (also called cross-entropy) in the setting of online binary sequence prediction. We make no distributional assumptions: the observation sequence $x\in\bits^*$ ($\bits:=\{\texttt{0},\texttt{1}\}$) may be chosen by an adversary.
To cover some common special cases, $x$ may be formed by concatenating self-delimiting tokens, perhaps sampled i.i.d. from a continuously parametrized distribution \citep[Section 3.7.2]{hutter2004universal}.

We denote the length of $x$ by $|x|$, its $t$th bit by $x_t$, and its first $t-1$ bits by $x_{<t}$. A \emph{predictor} $\nu$ assigns next-bit probabilities $\nu(x_t\mid x_{<t})\ge 0$ at each step $t$, forming the measure
$\nu(x) := \prod_{t=1}^{|x|}\nu(x_t\mid x_{<t})$.
If $\nu(\texttt{0}\mid x) + \nu(\texttt{1}\mid x) = 1$
for all $x \in \bits^*$,
then $\nu$ is a \emph{probability measure}.
To allow prediction by general programs that may eventually cease to output bits
\parencites[Chapter 4]{li2019introduction}{wyeth2025value},
we only require that
$\nu(\texttt{0}\mid x) + \nu(\texttt{1}\mid x) \le 1$,
and say $\nu$ is a \emph{semimeasure}
(short for \emph{semiprobability measure}). %
The log loss of a predictor $\nu$ over the sequence $x\in\bits^*$ is
\begin{equation}
\label{eq:loss}
L(\nu,\,x)
\,:=\, \log\frac{1}{\nu(x)}
\,=\, -\sum_{t=1}^{|x|}\log\nu(x_t\mid x_{<t}).
\end{equation}

Say we have a countable set of ``expert'' predictors $\{\nu_i\}_{i=1}^\infty$, with prior weights $w_i > 0$ (representing e.g.\  $\nu_i$'s prior credibility, or accessibility), $\sum_i w_i \le 1$.
We form the mixture-of-experts predictor
\begin{equation}
\label{eq:bayesmixture}
\xi(x)
\,:=\, \sum_{i=1}^\infty w_i \nu_i(x).
\end{equation}
The mixture predictor $\xi$ is competitive against (i.e.\ its loss is not much more than) every expert $\nu_i$. Formally, we see this by using the fact that $\xi \ge w_i\nu_i$, to bound the \emph{regret} of $\xi$ against $\nu_i$ on $x$:
\begin{equation*}
L(\xi,\,x) - L(\nu_i,\,x)
\,=\, \log\frac{\nu_i(x)}{\xi(x)}
\,\le\, \log\frac{1}{w_i}.
\end{equation*}
This bound is independent of $x$, but depends on the weight $w_i$ assigned to each expert.
We might hope to make a universal mixture predictor by taking each possible program as an expert; unfortunately, the halting problem precludes
such a mixture, or any good approximation, from being computable.
Indeed, every computable predictor can suffer infinite regret against the predictor, computable in about the same time, that makes the opposite prediction with high confidence \parencites{putnam1963degree}[Section 3]{solomonoff2009algorithmic}{sterkenburg2026solomonoff}.
We could compete against the latter by adding it to our mixture \labelcref{eq:bayesmixture}, but systematically adding many such experts drastically increases the computation time, while still allowing infinite regret against an analogous predictor.
The joint optimization of time and sample efficiency presents a tradeoff \citep{schmidhuber2002speed,veness2011monte,filan2016loss,nakkiran2021turing,finzi2026entropy,voitovych2026learning}
whose analysis is beyond our scope.

Instead, we can already gain insights from idealizations \citep{weisberg2007three,gabaix2008seven} that optimize sample efficiency in the infinite-time limit. There are several known constructions of limit-computable predictors that dominate all computable experts \parencites[Section 2.4.3]{hutter2004universal}{hutter2007semimeasures}, all of which can be described as variants of ``universal induction.'' They appear to have similar properties, so for simplicity we focus on the original Solomonoff-Levin choice of $w_i$ and $\nu_i$ in \eqref{eq:bayesmixture}
\citep{solomonoff1964formal,levin1971thesis,solomonoff1978complexity}.
To introduce it, we need some concepts from algorithmic information theory \citep{li2019introduction,hutter2024introduction}.

\subsection{Elements of algorithmic probability}
\label{sec:algprob}

Computers encode data and algorithms alike as binary strings. Given a finite string $x\in\bits^*$, and a finite or infinite string $y\in\bits^*\cup\bits^\infty$, their concatenation $x_1x_2\ldots x_{|x|}y_1y_2\ldots$ is denoted $xy$, and we write $x\sqsubseteq xy$ to say that $x$ is a prefix of $xy$. For $x,y\in\bits^\infty$, their interleaving $x_1y_1x_2y_2\ldots$ is denoted $x\oplus y$. The infinite string of zeros is denoted $\mathbf 0:=\texttt{0}^\infty$. Natural numbers $n\in\nats$ are encoded as self-delimiting strings $\bar n\in\bits^*$ with $|\bar n|=O(\log n)$, so that e.g.\ each pair $(n,x)\in\nats\times\bits^*$ is uniquely decodable from the string $\bar n x$.\footnote{One such construction: $\bar n$ consists of $\floor{\log_2 n}$ \texttt{0}s, followed by $n$'s standard binary encoding.}

It will be convenient to fix a universal reference computer $U$, which we take to be a \emph{monotone Turing machine} with four binary data tapes. It has a pair of read-only
input tapes, one of whose contents we call an \emph{oracle} and the other a \emph{program}; a two-way read-write work tape; and a one-way write-only output tape.\footnote{For readers unfamiliar with the Turing machine formalism, it will suffice to think of $U$ as a computer that reads from two input streams and writes to one output stream.
Once a bit is written, it can never be overwritten.
The work tape functions as the working random-access memory (RAM).
While many authors define the input tapes as one-way,
this choice is not vital;
one-way inputs can be copied to the work tape,
or two-way input tapes can be prefixed with infinitely many zeros.
}
We can think of the oracle tape as containing a downloaded snapshot of all available data, e.g.\ the World Wide Web. Alternatively, it may be more intuitive to view the oracle as a hardware abstraction layer that fetches data on demand.\footnote{Our oracle takes integer addresses instead of Bash commands containing URLs, but it is easy to translate between the two.} For simplicity, we treat the data as fixed and ignore time and space complexity, making these two interpretations equivalent.

The tapes are infinitely long; this makes it possible, for example, to concatenate a randomized algorithm with its own infinite source of random bits, even though $U$ is deterministic. Every cell of the work tape is initially set to $\texttt{0}$. Given an oracle $q\in\bits^\infty$ and program $p\in\bits^\infty$, let $U^q(p)\in\bits^*\cup\bits^\infty$ denote the output of $U$, defined as the (finite or infinite) sequence of bits that it eventually writes. Since the output tape is one-way, this is well-defined regardless of whether $U$ halts.

We choose $U$ to be \emph{universal} in the sense that there exists an effective enumeration of all such monotone Turing machines $\{T_i\}_{i\in\nats}$, such that for all $i\in\nats$ and $p,q\in\bits^\infty$,
\begin{equation}
\label{eq:universalturing}
U^q(\bar i p) \,=\, T_i^q(p)
.\end{equation}
We can think of $\bar i$ as an interpreter for $T_i$ in the language of $U$. Note that if $q$ contains code for an interpreter, then we can choose $T_i$ to be a computer that seeks and runs this interpreter from the oracle tape; this allows $\bar i$ to describe only a memory address instead of directly coding the interpreter. Consequently, fixing $U$ is not a strong commitment to this particular computer; if we have access to additional computers, we can encode them as interpreters within $q$ for easy access.

For a given program $p \in \bits^\infty$,
define the semimeasure \emph{programmed} by $p$ as
\begin{equation}
\label{eq:mup}
\mu_p(x)
\,:=\, \prob_{\alpha\sim\lambda}
\left(x\sqsubseteq U^\alpha(p)\right).
\end{equation}
$\lambda$ denotes the Lebesgue measure on $\bits^\infty$, meaning the bits of $\alpha\in\bits^\infty$ are independent fair coin flips.
$\mu_p(x)$ gives the fraction of random oracle tapes $\alpha$ for which the program $p$'s output starts with $x$.
Using \eqref{eq:universalturing}, the program $\bar i \mathbf 0$
corresponds to the semimeasure $\mu_{\bar i \mathbf 0}(x) = \prob_{\alpha\sim\lambda}(x \sqsubseteq T_i^\alpha(\mathbf 0))$. Hence, $\{\mu_{\bar i \mathbf 0} : i\in\nats\}$ is the set of all
lower-semicomputable semimeasures \citep[Theorem 4.5.2]{li2019introduction}, which properly includes the set of all computationally feasible predictors.

We also define the Solomonoff-Levin semimeasure relative to a particular oracle $q \in \bits^\infty$ by
\begin{equation}
\label{eq:Mq}
M^q(x)
\,:=\, \prob_{\alpha\sim\lambda}
\left(x\sqsubseteq U^q(\alpha)\right),
\end{equation}
the fraction of random programs $\alpha$ whose output starts with $x$ when given \emph{access} to $q$.
$M^q$ has a positive probability of sampling from every machine $T_i$, and can therefore be viewed as a universal mixture \citep[Theorem 3.8.8]{hutter2024introduction}. Concretely, since $\prob_{\alpha\sim\lambda}(\bar i\sqsubseteq\alpha)=2^{-|\bar i|}$, $M^q$ equals the mixture \labelcref{eq:bayesmixture} with $w_i := 2^{-|\bar i|}$ and $\nu_i(x) := \prob_{\alpha\sim\lambda}(x\sqsubseteq T_i^q(\alpha))$. There exists a fixed machine $T_j$ (independent of $p,q$) satisfying $T_j^q(p) = U^p(q)$, so that in particular $\nu_j = \mu_q$. Hence, using \labelcref{eq:universalturing}, \labelcref{eq:mup}, and \labelcref{eq:Mq}, $\mu_{\bar j q} = M^q \ge w_j\nu_j = 2^{-|\bar j|}\mu_q$. Since $j$ is fixed,
we can write more compactly %
\begin{equation}
\label{eq:Mvsmu}
M^q(x) \,\geqx\, \mu_q(x),
\end{equation}
where the relational operators $\leqx,\geqx$ hide a multiplicative factor that depends only on $U$ (and $\eqx$ means that both $\leqx$ and $\geqx$ hold). Similarly, we use $\leqp,\geqp,\eqp$ to hide an additive term that depends only on $U$. Since the hidden constants are universal, independent of specific learning algorithms or data, $U$ can
be chosen in such a way as to make all the constants in this paper small \citep[Section 3.9]{li2019introduction}.

We can think of the information vantage point $U^q$ as a specialized computer, obtained by equipping the fixed reference computer $U$ with observer-dependent data $q$. Classical Solomonoff induction predicts according to the Bayesian prior $M^{\mathbf 0}$, which may suffer high
regret against a specialized expert that takes background information into account.  By placing the observer's information in $q$, \Cref{thm:oraclesolregret} will show that \emph{oracle Solomonoff induction} using $M^q$ has low regret against any expert that is accessible to us in practice. In particular, we need not worry about using the ``wrong'' universal computer: if we have access to a more suitable computer, it would be encoded within $q$.

\subsection{Sample-optimal online learning with oracle Solomonoff induction}

Shannon information theory studies the entropy and mutual information of random variables. It is readily applied to settings that sample repeatedly from i.i.d.\ or stationary ergodic sources, for which aggregate codelengths concentrate near their expectations \citep[Chapter 3 and 16.8]{thomas2006elements}.

Algorithmic information theory defines analogous quantities for individual samples without associated distributions \citep{grunwald2004shannon,li2019introduction}. Some argue this makes algorithmic information theory relevant to more general (e.g.\ non-ergodic) settings \parencites{kolmogorov1983combinatorial}[Section IV.B]{ebtekar2025foundations}. For our analysis, a useful analogue to the Shannon conditional entropy $H(X\mid Q)$ (for random variables $Q,X$) is the \emph{a priori complexity} of $x\in\bits^*\cup\bits^\infty$ relative to $q\in\bits^\infty$, given by
\begin{equation}
\label{eq:KM}
\KM^q(x) \,:=\, \log \frac{1}{M^q(x)}.
\end{equation}
The logarithm's base amounts to a choice of units \citep{frank2005indefinite}. If taken in base 2, $\KM^q$ is expressed in bits, and is typically slightly less than the prefix Kolmogorov complexity \citep[Theorem 3.8.7]{hutter2024introduction}.

In Shannon theory, the conditional mutual information is $I(P;X\mid Q) := H(X\mid Q) - H(X\mid P,Q)$ for random variables $P,Q,X$. Analogously, we define an asymmetric notion of algorithmic mutual information between $p\in\bits^\infty$ and $x\in\bits^*$, relative to $q\in\bits^\infty$, by
\begin{equation}
\label{eq:mutinf}
\MI^q(p : x)
\,:=\, \KM^q(x) - \KM^{p\oplus q}(x)
\,=\, \log\frac{M^{p\oplus q}(x)}{M^q(x)}.
\end{equation}
Recall that $p \oplus q$ is the interleaving of $p$ and $q$, providing oracle access to both strings.
Other definitions of algorithmic mutual information appear in the literature \parencites[Section 3.1]{gacs2021lecture}{vereshchagin2021proofs},
but $\MI^q$ is simple and useful for our purposes. 
\cref{sec:infproperties} establishes some intuitive properties:
\cref{thm:infobounds} shows
that $0 \leqp \MI^q(p : x)\leqp \min(\KM^q(p),\,\KM^q(x))$,
and \cref{thm:noinfoexnihilo} proves that
information cannot be created \emph{ex nihilo} by any small program, even with randomization.

The following bound states that if a predictor $\mu_p$ substantially outperforms $M^q$, its code $p$ must contain additional information about $x$ that is not in $q$. Since information cannot be created, there is no way to invent such a predictor. Thus, $M^q$ beats the sample-efficiency of all realistic predictors.

\begin{theorem}[Regret of oracle Solomonoff induction]\label{thm:oraclesolregret}
For all $p,q\in\bits^\infty$, the regret of a universal predictor $M^q$ against another predictor $\mu_p$, on any data $x\in\bits^*$, is bounded by
\begin{equation*}
L(M^q,\,x) - L(\mu_p,\,x)
\,\leqp\, \MI^q(p : x) - \MI^p(q : x)
\,\leqp\, \MI^q(p : x)
.\end{equation*}
\end{theorem}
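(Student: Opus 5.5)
The plan is to rewrite both sides of \cref{thm:oraclesolregret} as log-ratios of semimeasures, so that the first inequality reduces to \eqref{eq:Mvsmu} applied with the roles of $p$ and $q$ exchanged. By \eqref{eq:loss}, the left-hand side is exactly $\log\frac{\mu_p(x)}{M^q(x)}$. On the right-hand side, \eqref{eq:mutinf} gives
\[
\MI^q(p : x) - \MI^p(q : x) \,=\, \log\frac{M^{p\oplus q}(x)}{M^q(x)} - \log\frac{M^{q\oplus p}(x)}{M^p(x)}.
\]
This is not quite a telescoping difference, because the two terms involve $M^{p\oplus q}$ and $M^{q\oplus p}$ respectively.

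The first step is therefore to show $M^{p\oplus q}\eqx M^{q\oplus p}$. There is a fixed monotone machine $T_s$ that simulates $U$ while redirecting each oracle query at an even position to the corresponding odd position and vice versa. It satisfies $T_s^{q\oplus p}(\alpha)=U^{p\oplus q}(\alpha)$ for every program $\alpha$. By \eqref{eq:universalturing}, the program $\bar s\alpha$ fed to $U^{q\oplus p}$ reproduces the output of $\alpha$ under $U^{p\oplus q}$. Hence $M^{q\oplus p}\ge 2^{-|\bar s|}M^{p\oplus q}$, and the symmetric argument gives the reverse inequality. Taking logs, the $M^{p\oplus q}$ terms cancel up to an additive constant, and
\[
\MI^q(p : x) - \MI^p(q : x) \,\eqp\, \log\frac{M^p(x)}{M^q(x)}.
\]

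The first inequality then reads $\log\mu_p(x)\leqp\log M^p(x)$. This is precisely \eqref{eq:Mvsmu} with the oracle taken to be $p$: the paper's argument there holds for an arbitrary oracle, via the fixed machine $T_j$ with $T_j^{p}(\alpha)=U^\alpha(p)$. So $M^p\geqx\mu_p$, where the constant depends only on $U$.

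The second inequality is just $\MI^p(q : x)\geqp 0$, the lower bound in \cref{thm:infobounds} with the roles of $p$ and $q$ swapped. The only step requiring any care is the interleaving-swap argument, which must be checked to go through with the constant independent of $p$, $q$ and $x$. This holds because $T_s$ is a single fixed machine, so it should be routine.
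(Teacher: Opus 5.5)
Your proposal is correct and follows essentially the same route as the paper: bound $\mu_p \leqx M^p$ via \eqref{eq:Mvsmu}, split $\log\frac{M^p(x)}{M^q(x)}$ through $M^{p\oplus q}$, use $M^{p\oplus q}\eqx M^{q\oplus p}$ to identify the second term with $-\MI^p(q:x)$, and finish with \cref{thm:infobounds}. The only difference is that you explicitly justify the interleaving swap with a fixed oracle-permuting machine, which the paper simply asserts.
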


\begin{proof}
Using \cref{eq:loss,eq:Mvsmu},
\[
L(M^q,\,x) - L(\mu_p,\,x)
= \log\frac{\mu_p(x)}{M^q(x)}
\leqp \log\frac{M^p(x)}{M^q(x)}
= \log\frac{M^{p\oplus q}(x)}{M^q(x)}
- \log\frac{M^{p\oplus q}(x)}{M^p(x)}
.\]
Using the definition \eqref{eq:mutinf},
the first term is $\MI^q(p : x)$.
Since $M^{p \oplus q}(x) \eqx M^{q \oplus p}(x)$,
the second term is $\eqp -\MI^p(q : x)$, which is $\leqp 0$ by \cref{thm:infobounds}.
\end{proof}

Combining \Cref{thm:oraclesolregret,thm:infobounds} yields the weaker bound $L(M^q,\,x) - L(\mu_p,\,x) \leqp \KM^q(p)$,
which generalizes Corollary 3.8.10 of \citep{hutter2024introduction}.
\Cref{thm:oraclesolregret} goes further by
clarifying what type of information is relevant to the regret.
For example, suppose $p=r\alpha$ consists of source code $r$ for a randomized algorithm, along with a typical sample $\alpha$ from an infinite source of random bits, neither of which is tailored to the data $x$. Then $\KM^q(p)$ is infinite, while $\MI^q(p : x)$ is small.

Although oracle Solomonoff induction is formally Bayesian, \Cref{thm:oraclesolregret} provides a distribution-free frequentist guarantee. This suggests an \emph{interpretation} of probabilities \parencites{gillies2000philosophical}[Part IV]{hajek2016oxford}: no matter what $x$ the world presents, an idealized learner may model its subjective uncertainty using $M^q$ \citep{rathmanner2011philosophical}. Our position does not take a side in the Bayesian-frequentist divide \citep{mayo2018statistical,sprenger2019bayesian}, but may help illuminate why it exists. Given the framing from \Cref{sec:nfl}, where Bayesian inference fixes its inductive bias up front and frequentist inference accumulates it online, an idealized learner with unlimited resources can afford to commit to a universal prior, and is naturally Bayesian. A resource-bounded learner cannot, so its inductive bias must accumulate over time. Any realistic attempt to approximate $M^q$ would find a never-ending sequence of candidate programs; therefore, we can expect practical methods to acquire a mix of Bayesian and frequentist character \citep{sterkenburg2022truth}.

\section{Neural network generalization} \label{sec:nn-gen}

Some of the most famous results in deep learning are ``universal approximation theorems,'' which say that neural networks can approximate arbitrary functions \citep{cybenko1989approximation,hornik1991approximation,deepnarrow}. Referring back to \cref{fig:circles}, this is like saying neural networks are a large class; they will fit the training data, but uniform convergence will not apply.
While this is frequently cited as ``the reason'' that neural networks work well in practice,
this is misleading:
equivalent universal approximation results are shared by Gaussian kernel spaces \citep[Section 4.6]{steinwart-christmann}
and even histograms \citep[Section 7.4]{royden-fitzpatrick}.

In order to generalize well, we need more: a universal inductive bias, which approaches something like $M^q$ in a limit of large model size and compute. The information vantage point $U^q$ may include training data; before seeing data, it may be informed by common computing primitives.
There are some indications that modern models may correspond to a simplicity bias in universal computing terms \citep{valleperez2019deep,mingard2025deep}.
Such arguments have recently been made to explain various empirical results in practical networks \citep{deletanglanguage,goldblum2024freelunchtheoremkolmogorov,huh2024platonicrepresentationhypothesis,huang2024compressionrepresentsintelligencelinearly},
as well as in toy models \citep{ren2024understandingsimplicitybiascompositional}.

How does this bias occur?
In addition to mechanisms discussed by the previous papers,
\citet{buzaglo2024uniform} prove that wide feedforward networks have disproportionately large regions of parameter space corresponding to functions also computed by much narrower networks.
Thus, a random search for interpolating solutions is more likely to correspond to a narrow network,
supporting empirical findings that random search generalizes well \citep{chiang2023loss,pakman2026revisitingvolumehypothesis}.
Narrow networks are not quite the same as short programs, but they are closely related, as small feedforward networks can directly encode small binary circuits \citep{circuit-complexity}.
Adding recurrence and an external memory turns circuits into universal Turing machines; transformers with chain of thought move toward that \citep{attn-turing-complete,expressive-cot,young2024transformers,pencil,li:constant-size-turing}.
Whether the parameter space bias extends to favoring short programs,
and the consequences for practical optimization methods, remain to be seen.

\citet{jacot2026resnets} suggests another distinct mechanism:
under certain assumptions, a particular norm of ResNet parameters is closely related to the size of a minimal circuit implementing the same function.
It is not yet clear whether practical optimization algorithms minimize that parameter norm.

Large language models learn to learn: after pre-training, they are able to ``learn in-context'' within a text sequence \citep{dong2024survey}, in a manner that has been argued to resemble Bayesian inference \citep{xie2022explanationincontextlearningimplicit,wakayama2025incontextlearningprovablybayesian}, or even specifically Solomonoff induction \citep{wan2025large}. Without needing any real data at all, \citet{grau2024learning} train a neural network to approximate Solomonoff induction in-context, using synthetic data generated by random programs with a time bound. \citet{cowsik2026selfplay} extend this approach by co-evolving a generator for the random programs. A related line of work by
\citet{shaw2026bridging} trains a neural network to approximate the Minimum Description Length principle \citep{blum2003pac,wallace2005statistical,hutter2009discrete,grunwald2019minimum,zhu25a,li2026prediction}.

According to No Free Lunch, fitting to training inputs, even if sampled from a Solomonoff measure, still allows for arbitrary behavior on unseen inputs. Therefore, the strong generalization performance that these authors elicit on unseen inputs cannot be solely attributed to their synthetic data. We should investigate the extent to which successful machine learning algorithms and architectures exhibit a pseudo-universal inductive bias, prior to taking any training data.

\section{Discussion}
\citet{goldblum2024freelunchtheoremkolmogorov} recently argued that
algorithmic information theory explains how to beat No Free Lunch, because
``low-complexity structure shared by real-world datasets and machine learning models enables broad generalization across domains and sample sizes with a single model class.''
We do not disagree,
but we ask: low complexity in terms of
\emph{what}, and \emph{why} \citep{sterkenburg2026solomonoff}?
Any argument based on empirical success -- as theirs is --
reverts back to the problem of meta-induction as in \cref{thm:meta-nfl}.
Our accessibility framing provides a crisper account of optimal learning:
while we cannot be sure that $M^q$ will predict accurately,
\cref{thm:oraclesolregret} guarantees that no realistic alternative is substantially better.

This framing brings prescriptive value:
generalist AI systems can aim to approximate the regret bound in \cref{thm:oraclesolregret}, at least when their compute budget greatly exceeds that of an expert $\mu_p$.
It is also of descriptive value:
there is reason to think that deep learning and transformers have moved us closer to universal induction. We hope to apply this perspective toward understanding what AI systems do, and what they \emph{should} do in challenging settings, such as in-context out-of-distribution generalization.
Where real AI systems fall short of true universality -- due to computational bounds or other differences -- we expect $M^q$ to serve as a useful baseline from which to understand deviations. Models of agency based on $M^q$, such as AIXI, can help us anticipate future agentic capabilities and behaviors, and identify safety risks before they occur \parencites[Chapter 15]{hutter2024introduction}{meulemans2025embedded}{ebtekar2026golden}.

Furthermore, we saw that the study of universal induction inspires additional concepts such as the algorithmic mutual information, which quantifies a model's knowledge of a dataset. Algorithmic information theory is a mathematical discipline replete with concepts relevant to machine learning, accounting for factors such as accessibility, model size, time bounds, and even novelty \citep{filan2016loss,vereshchagin2016algorithmic,li2019introduction,zaffora2025bayesian,ebtekar2025toward,finzi2026entropy,ebtekar2026golden,voitovych2026learning}. We believe its applications are severely underexplored, and its relevance grows rapidly with advancing AI capabilities.

\begin{ack}

The authors would like to thank
Cole Wyeth for particularly extensive feedback, as well as Tom Sterkenburg, Micah Goldblum, Justice Sefas, David Quarel, David Wolpert, David Dowe, Tor Lattimore, Daniel Herrmann, Aydin Mohseni, Francesca Zaffora Blando, Nathan Srebro, Henrik Marklund, Alex Infanger, Lily Stelling,  Annabelle Rondeau, and David Zheng for productive conversations.

This work was supported in part by the Canada CIFAR AI Chairs program, the AI Safety Tactical Opportunities Fund (AISTOF), and Coefficient Giving.
\end{ack}

\printbibliography

\clearpage
\appendix
\crefalias{section}{appendix}
\crefalias{subsection}{appendix}

\section{Additional properties of algorithmic mutual information} \label{sec:infproperties}
Throughout this section, it will be convenient to take $\KM$ and $\MI$ to be measured in bits (i.e.\ using base 2 logarithms in \labelcref{eq:KM,eq:mutinf}), which are also the length unit for bit strings.

\subsection{Supporting derivations}

Our first result is analogous to the Shannon theory inequality for random variables $P,Q,X$:
\[0\le I(P;X\mid Q)\le\min\left(H(P\mid Q),\,H(X\mid Q)\right).\]
\begin{proposition}[Information bounds]
\label{thm:infobounds}
For all $p,q\in\bits^\infty$ and $x\in\bits^*$,
\begin{equation*}
0\le \KM^q(x) \leqp |x|,\qquad
0\leqp \MI^q(p : x)\leqp \min(\KM^q(p),\,\KM^q(x)).
\end{equation*}
\end{proposition}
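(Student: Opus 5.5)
The plan is to prove each of the four inequalities separately, using the universality property \eqref{eq:universalturing}. Each time, I would exhibit a specific monotone machine $T_i$ and use the resulting domination $M^q \ge 2^{-|\bar i|}\nu_i$, which holds as in the discussion preceding \eqref{eq:Mvsmu}. Since $i$ will not depend on $p,q,x$, the factor $2^{-|\bar i|}$ is absorbed into $\geqx$, or into $\leqp$ after taking logarithms.

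First, $\KM^q(x)\ge 0$ is immediate from $M^q(x)\le 1$, since $M^q(x)$ is a probability. For $\KM^q(x)\leqp |x|$, I would take $T_i$ to be the machine that ignores its oracle and copies its program tape to its output tape bit by bit. Then $T_i^q(\alpha)=\alpha$, so $x\sqsubseteq T_i^q(\alpha)$ holds exactly when $x\sqsubseteq\alpha$, an event of probability $2^{-|x|}$. Hence $M^q(x)\ge 2^{-|\bar i|}2^{-|x|}$.

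Second, for $\MI^q(p:x)\geqp 0$ I need $M^{p\oplus q}(x)\geqx M^q(x)$. I would take $T_i$ to be the machine that, given oracle $r$, simulates $U$ on the same program but with oracle equal to the even-indexed bits of $r$. Then $T_i^{p\oplus q}(\alpha)=U^q(\alpha)$, and $M^{p\oplus q}(x)\ge 2^{-|\bar i|}M^q(x)$ follows. The upper bound $\MI^q(p:x)\le\KM^q(x)$ holds even without a constant: because $M^{p\oplus q}(x)\le 1$, we get $\log\frac{M^{p\oplus q}(x)}{M^q(x)}\le\log\frac{1}{M^q(x)}$.

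The main step is $\MI^q(p:x)\leqp\KM^q(p)$. This is equivalent to a chain-rule-type inequality
\[
M^q(x)\,\geqx\, M^q(p)\,M^{p\oplus q}(x),
\]
where $M^q(p)$ is the probability that $U^q(\alpha)$ outputs all of the infinite string $p$. I would build $T_i$ as follows. With oracle $q$, it reads its program as an interleaving $\alpha=\beta\oplus\gamma$ and runs two simulations.
\begin{itemize}
\item Simulation A runs $U^q(\beta)$ and records its output bits so far, which form a growing candidate prefix of $p$.
\item Simulation B runs $U$ on program $\gamma$ with oracle $\tilde p\oplus q$, where $\tilde p$ is the output of A. Whenever B queries an odd-indexed oracle bit (a bit of $\tilde p$) that A has not yet produced, $T_i$ advances A until that bit appears. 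If it never appears, $T_i$ simply stalls, which a monotone machine is allowed to do.
\end{itemize}
The output of $T_i$ is exactly the output of B. Because $\beta$ and $\gamma$ are independent fair coin sequences when $\alpha\sim\lambda$, the event that $U^q(\beta)$ outputs all of $p$ and, simultaneously, $x\sqsubseteq U^{p\oplus q}(\gamma)$ has probability $M^q(p)\,M^{p\oplus q}(x)$. On this event $x\sqsubseteq T_i^q(\alpha)$, which gives the displayed inequality; taking logarithms yields the claim.

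I expect the main obstacle to be making this simulation rigorous. Specifically, I must check that the lazy, on-demand generation of oracle bits agrees with running B on the true oracle $p\oplus q$ whenever A outputs $p$. This holds because B's run producing $x$ reads only finitely many oracle bits, and all of them are eventually supplied correctly. I must also confirm that the events involved are measurable, which is standard since each is a countable union of cylinder events.
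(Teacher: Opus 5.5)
Your proposal is correct and follows the paper's proof essentially step for step. You use the same copying machine for $\KM^q(x)\leqp|x|$, the same oracle-ignoring machine for $\MI^q(p:x)\geqp 0$, the trivial bound $M^{p\oplus q}(x)\le 1$ for $\MI^q(p:x)\le\KM^q(x)$, and the same two-stage machine $T_j^q(\alpha\oplus\beta):=U^{U^q(\beta)\oplus q}(\alpha)$ to obtain $M^q(x)\geqx M^q(p)\,M^{p\oplus q}(x)$. Your lazy on-demand generation of oracle bits, with stalling when a bit never arrives, is a more explicit version of the paper's convention that the machine stops writing output if it queries an index that $U^q(\beta)$ never produces.
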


\begin{proof}
By considering the machine $T_k^q(\alpha):=\alpha$ that copies its program to the output tape, we obtain
\begin{align*}
M^q(x)
&= \prob_{\gamma\sim\lambda}\left(x\sqsubseteq U^q(\gamma)\right)
\\&\ge 2^{-|\bar k|-|x|}\prob_{\alpha\sim\lambda}\left(x\sqsubseteq U^q(\bar k x \alpha)\right)
\\&= 2^{-|\bar k|-|x|}\prob_{\alpha\sim\lambda}\left(x\sqsubseteq T_k^q(x \alpha)\right)
\\&= 2^{-|\bar k|-|x|}\prob_{\alpha\sim\lambda}\left(x\sqsubseteq x \alpha\right)
\\&= 2^{-|\bar k|-|x|}.
\end{align*}
Taking logarithms and applying \labelcref{eq:KM} yields $0\le \KM^q(x)\le |\bar k| + |x| \leqp |x|$.

By considering the machine $T_i^{p\oplus q}(\alpha):= U^q(\alpha)$ that ignores $p$, we obtain $M^{p\oplus q}(x)\geq 2^{-|\bar i|} M^q(x)\eqx M^q(x)$. Hence from the definition \labelcref{eq:mutinf}, $0\leqp \MI^q(p : x)\le \KM^q(x)$.

For the remaining inequality, consider the machine $T_j^q(\alpha\oplus\beta):= U^{U^q(\beta)\oplus q}(\alpha)$. This involves an abuse of notation, as $U^q(\beta)$ is not necessarily an infinite string; if the machine tries to access an invalid index, we take it to cease writing to the output tape. Since $j$ is a universal constant,
\begin{align*}
M^q(x)
&= \prob_{\gamma\sim\lambda}\left(x\sqsubseteq U^q(\gamma)\right)
\\&\geqx \prob_{\alpha,\beta\sim\lambda}\left(x\sqsubseteq U^q(\bar j (\alpha\oplus \beta))\right)
\\&= \prob_{\alpha,\beta\sim\lambda}\left(x\sqsubseteq T_j^q(\alpha\oplus \beta)\right)
\\&= \prob_{\alpha,\beta\sim\lambda}\left(x\sqsubseteq U^{U^q(\beta)\oplus q}(\alpha)\right)
\\&\ge \prob_{\beta\sim\lambda}(U^q(\beta)=p) \cdot \prob_{\alpha\sim\lambda}\left(x\sqsubseteq U^{p\oplus q}(\alpha)\right)
\\&= M^q(p)\cdot M^{p\oplus q}(x).
\end{align*}
By rearranging and taking logarithms, $\MI^q(p : x)\leqp \KM^q(p)$.
\end{proof}

Next, we show that no deterministic or randomized meta-algorithm $r$ can reliably create $p$ that has high mutual information with $x$. \citet{levin1984randomness,vereshchagin2021proofs} proved similar results for symmetric notions of mutual information, under the name \emph{conservation of randomness}; \citet{ebtekar2025foundations} frame one such result as a generalization of the second law of thermodynamics.

\begin{theorem}[No information ex nihilo]
\label{thm:noinfoexnihilo}
Let $r\in\bits^*$ be such that the randomly generated string $p := U^q(r\beta)$ is infinitely long for almost all $\beta\sim\lambda$. Then, for all $x\in\bits^*$,
\begin{equation*}
\expect_p \left[\MI^q(p:x)\right]
\le \log_2 \expect_p \left[2^{\MI^q(p:x)}\right]
\leqp |r|.
\end{equation*}
\end{theorem}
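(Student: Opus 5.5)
The first inequality is Jensen's inequality, since $\log_2$ is concave: $\E_p[\MI^q(p:x)] = \E_p[\log_2 2^{\MI^q(p:x)}] \le \log_2\E_p[2^{\MI^q(p:x)}]$. So the substance is the second inequality. By \eqref{eq:mutinf}, $2^{\MI^q(p:x)} = M^{p\oplus q}(x)/M^q(x)$, so it suffices to prove
\[
\E_{\beta\sim\lambda}\left[M^{U^q(r\beta)\oplus q}(x)\right] \,\leqx\, 2^{|r|}\, M^q(x),
\]
with the multiplicative constant depending only on $U$. This is a linearity statement: averaging the oracle-relative mixture over randomly generated oracles should itself be a universal-mixture-like quantity that $M^q$ dominates up to the cost of describing $r$.

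I would reuse the machine from the proof of \cref{thm:infobounds}. Let $T_j^q(\alpha\oplus\beta) := U^{U^q(\beta)\oplus q}(\alpha)$, with the same convention that an invalid oracle access halts output. Also define a variant $T_{j'}$ that first reads a self-delimiting string $r$ (given as $\bar{|r|}\,r$, or parsed directly if we assume $r$ is itself a self-delimiting program prefix) and then runs $U^{U^q(r\beta)\oplus q}(\alpha)$ on the remaining interleaved tape $\alpha\oplus\beta$. Then
\[
M^q(x) \,\ge\, 2^{-|\bar{j'}|-|\bar{|r|}|-|r|}\,\prob_{\alpha,\beta\sim\lambda}\left(x\sqsubseteq U^{U^q(r\beta)\oplus q}(\alpha)\right).
\]
By Fubini, the last probability equals $\E_\beta\bigl[\prob_\alpha(x\sqsubseteq U^{p\oplus q}(\alpha))\bigr] = \E_p[M^{p\oplus q}(x)]$. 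Here the hypothesis that $p$ is infinite almost surely ensures that for almost every $\beta$ the oracle is a genuine element of $\bits^\infty$, so the inner quantity is exactly $M^{p\oplus q}(x)$. Rearranging and taking $\log_2$ gives $\log_2\E_p[2^{\MI^q(p:x)}] \le |r| + O(\log|r|) + O(1)$.

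The main obstacle is the $O(\log |r|)$ overhead from delimiting $r$, since the claim is $\leqp |r|$ with only a constant. I would try to avoid it by absorbing the delimiter into $r$ itself. The natural reading is that $r$ is a program prefix for $U^q$: $U$ reads $r$ and then consumes random bits. If so, there is no need to separate $r$ from $\beta$ at all. Take $T_j^q(\alpha\oplus\gamma) := U^{U^q(\gamma)\oplus q}(\alpha)$ exactly as before. Then restrict the probability over $\gamma$ to the event $r\sqsubseteq\gamma$, which has probability $2^{-|r|}$, and write $\gamma = r\beta$. This gives $M^q(x) \geqx \prob_{\alpha,\gamma}(x\sqsubseteq U^{U^q(\gamma)\oplus q}(\alpha)) \ge 2^{-|r|}\,\E_\beta[M^{U^q(r\beta)\oplus q}(x)]$ with no delimiter cost, because $T_j$ never needs to know where $r$ ends. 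The only care needed is measurability and the null set of $\beta$ where $U^q(r\beta)$ is finite; there the inner probability is simply dropped, which only decreases the lower bound, so the inequality is preserved. Taking logarithms yields the theorem.
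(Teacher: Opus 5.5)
Your final argument is correct and is essentially the paper's proof. The paper uses the same machine $T_j^q(\alpha\oplus\beta) := U^{U^q(\beta)\oplus q}(\alpha)$ and restricts to programs of the form $\bar j(\alpha\oplus r\beta)$, which have probability $2^{-|\bar j|-|r|}$ and need no delimiter because $T_j$ never parses $r$; it then obtains $M^q(x)\geqx 2^{-|r|}\E_p M^{p\oplus q}(x)$ via Fubini, finishes with Jensen, and so the $O(\log|r|)$ delimiter detour you discarded was indeed unnecessary.
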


\begin{proof}
Consider again the machine $T_j^q(\alpha\oplus\beta) := U^{U^q(\beta)\oplus q}(\alpha)$. A random string $\gamma\sim\lambda$ has probability $2^{-|\bar j|-|r|}$ of having the form $\bar j (\alpha\oplus r\beta)$ for some $\alpha,\beta\in\bits^\infty$. Since $j$ is constant,
\begin{align*}
M^q(x)
&= \prob_{\gamma\sim\lambda}\left(x\sqsubseteq U^q(\gamma)\right)
\\&\geqx 2^{-|r|} \prob_{\alpha,\beta\sim\lambda}\left(x\sqsubseteq U^q(\bar j(\alpha\oplus r\beta))\right)
\\&= 2^{-|r|} \expect_{\beta\sim\lambda}\prob_{\alpha\sim\lambda}\left(x\sqsubseteq T_j^q(\alpha\oplus r\beta)\right)
\\&= 2^{-|r|} \expect_{\beta\sim\lambda}\prob_{\alpha\sim\lambda}\left(x\sqsubseteq U^{U^q(r\beta)\oplus q}(\alpha)\right)
\\&= 2^{-|r|} \expect_p \prob_{\alpha\sim\lambda}\left(x\sqsubseteq U^{p\oplus q}(\alpha)\right)
\\&= 2^{-|r|} \expect_p M^{p\oplus q}(x).
\end{align*}

Rearranging, and applying the definition \labelcref{eq:mutinf},
\[\expect_p \left[2^{\MI^q(p:x)}\right]
= \frac{\expect_p M^{p\oplus q}(x)}{M^q(x)}
\leqx 2^{|r|}.\]
The result now follows from Jensen's inequality.
\end{proof}

Suppose we want to design a predictor $\mu_p$ that substantially outperforms $M^q$. According to \cref{thm:oraclesolregret}, this requires $\MI^q(p:x)$ to be high. What kind of process can generate a suitable program $p$? If we accept the physical Church-Turing thesis, the design process must be controlled by a preexisting meta-algorithm, perhaps inside our brain, making it accessible from our information vantage point. We can model any accessible method as using very little new code $r$, which uses the information vantage point $q$ and random source $\beta$ to generate $p := U^q(r\beta)$. \Cref{thm:noinfoexnihilo} says that such a process almost always produces low $\MI^q(p:x)$, so $\mu_p$ will not predict substantially better than $M^q$.

For the purposes of the following subsection, we need one additional result.

\begin{theorem}[Information supplied by a self-delimiting prefix]
\label{thm:prefixinformation}
Suppose $p,q\in\bits^\infty$ and $r,x\in\bits^*$, with $r$ belonging to a fixed computably enumerable set, no member of which is a proper prefix of another. Then,
\begin{equation*}
\MI^q(rp:x)
\leqp |r| + \MI^q(p:x).
\end{equation*}
\end{theorem}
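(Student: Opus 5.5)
Unwinding the definition \eqref{eq:mutinf}, the claim is equivalent to
\[
M^{rp\oplus q}(x) \,\leqx\, 2^{|r|}\, M^{p\oplus q}(x),
\]
since the term $\KM^q(x)$ appears on both sides and cancels. So I would prove this multiplicative bound directly. The natural route is a simulation argument: exhibit a single fixed machine that, given oracle $p\oplus q$, reproduces the behaviour of $U$ with oracle $rp\oplus q$ for every admissible $r$ at once. Let $S$ be the fixed computably enumerable prefix-free set containing $r$.

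First I would construct the machine $T_j$, independent of $p$, $q$ and $r$. On program $\gamma$ and oracle $p\oplus q$, it reads bits of $\gamma$ one at a time while enumerating $S$. It waits until the prefix read so far is some $s\in S$; prefix-freeness makes this decomposition $\gamma = s\alpha$ unique. If no prefix ever appears in $S$, the machine simply never writes output. Once $s$ is found, it runs $U$ on program $\alpha$, answering each oracle query to the virtual tape $sp\oplus q$ as follows: positions inside $s$ come from its stored copy of $s$, and later positions are translated into index-shifted queries to $p$ in the real oracle $p\oplus q$. Hence $T_j^{p\oplus q}(s\alpha) = U^{sp\oplus q}(\alpha)$ for all $s\in S$ and all $\alpha$.

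Second, I would compute the probability:
\[
M^{p\oplus q}(x) \,\geqx\, \prob_{\gamma}\bigl(x\sqsubseteq T_j^{p\oplus q}(\gamma)\bigr) \,\ge\, \prob_\gamma(r\sqsubseteq\gamma)\,\prob_{\alpha}\bigl(x\sqsubseteq U^{rp\oplus q}(\alpha)\bigr) \,=\, 2^{-|r|} M^{rp\oplus q}(x).
\]
The first inequality uses \eqref{eq:universalturing}: programs beginning with $\bar j$ run $T_j$, at a constant cost $2^{-|\bar j|}$. Rearranging and taking $\log_2$ then gives exactly the stated bound.

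The main subtlety is the role of prefix-freeness together with computable enumerability. Without them, $T_j$ could not determine where $r$ ends inside its own program, and the constant would otherwise have to absorb $|\bar{|r|}| = O(\log|r|)$. Enumerability is what makes the search for $s$ effective. A program that never reaches a member of $S$ just produces no output, which is harmless for a monotone machine, since we only need a lower bound on $M^{p\oplus q}(x)$.
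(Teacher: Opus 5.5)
Your proposal is correct and is essentially the paper's proof: you build a fixed machine $T_j$ (depending only on the fixed prefix-free c.e.\ set) with $T_j^{p\oplus q}(r\alpha) = U^{(rp)\oplus q}(\alpha)$, then pay $2^{-|\bar j|-|r|}$ for programs beginning with $\bar j r$ to obtain $M^{p\oplus q}(x) \geqx 2^{-|r|} M^{(rp)\oplus q}(x)$. Your extra detail on how $T_j$ locates $r$ spells out what the paper compresses into ``since $r$ is self-delimiting'': you dovetail the enumeration of the set with reading the program, and prefix-freeness makes the match unique.
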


\begin{proof}
Since $r$ is self-delimiting, there exists a fixed machine $T_j$ that reads $r$, and simulates $U$ on a modified oracle and program, so that $T_j^{p\oplus q}(r\alpha) := U^{(rp)\oplus q}(\alpha)$. A random string $\gamma\sim\lambda$ has probability $2^{-|\bar j|-|r|}$ of starting with $\bar j r$. Since $j$ is constant,
\begin{align*}
M^{p\oplus q}(x)
&= \prob_{\gamma\sim\lambda}
   \left(x\sqsubseteq U^{p\oplus q}(\gamma)\right)
\\&\geqx 2^{-|r|}
   \prob_{\alpha\sim\lambda}
   \left(x\sqsubseteq U^{p\oplus q}(\bar j r\alpha)\right)
\\&= 2^{-|r|}
   \prob_{\alpha\sim\lambda}
   \left(x\sqsubseteq T_j^{p\oplus q}(r\alpha)\right)
\\&= 2^{-|r|}
   \prob_{\alpha\sim\lambda}
   \left(x\sqsubseteq U^{(rp)\oplus q}(\alpha)\right)
\\&= 2^{-|r|} M^{(rp)\oplus q}(x).
\end{align*}
Applying the definition \labelcref{eq:mutinf},
\[
\MI^q(rp:x) - \MI^q(p:x)
= \log_2\frac{M^{(rp)\oplus q}(x)}{M^{p\oplus q}(x)}
\leqp |r|.
\qedhere \]
\end{proof}

\subsection{Anytime prediction}

In \labelcref{eq:mup}, $p$ can be viewed as the code for an anytime algorithm that makes predictions according to successive approximations of $\mu_p$. Indeed, we can simulate $U^\alpha(p)$ for all possible $\alpha\in\bits^\infty$ in parallel: each time $U$ reads a new bit of $\alpha$, fork the computation into cases where the bit is $\texttt{0}$ or $\texttt{1}$, allocating an equal share of time to each. Each time $U$ outputs a bit, add $2^{-n}$ to our estimate of $\mu_p(x)$, where $n$ is the number of bits read from $\alpha$ so far, and $x$ is the output so far. Since these estimates are monotonically increasing and converge to $\mu_p(x)$, we say the prior $\mu_p$ is \emph{lower-semicomputable} (if $p$ is a finite program padded with $\mathbf 0$; otherwise, it is lower-semicomputable \emph{relative to $p$}).

To complete the anytime prediction algorithm, we can prepend to $p$ a constant instruction along with an encoding of the history $h\in\bits^*$, to make it output successive estimates of the posterior
\[\mu_p(x\mid h)
:= \frac{\mu_p(hx)}{\mu_p(h)}.\] 
A ratio of increasing estimates is not itself necessarily increasing, so the posterior is not in general lower-semicomputable.
It is, however, \emph{limit-computable} \parencites[Section 3]{leike2015computability}[Section 3.3]{sterkenburg2026solomonoff}:
the ratio of estimates converges (non-monotonically) to the posterior $\mu_p(x\mid h)$.

Since our posterior predictions are only limit-computable,
one might wonder about the performance of the broader class of anytime algorithms with this property.
That is, instead of the lower-semicomputable prior $\mu_p$, suppose we start with a general limit-computable prior;
then the posterior, being a ratio of priors, would still be limit-computable.

One such algorithm may estimate the posteriors of $M^q$ with sufficient precision to predict Putnam's adversarial $x$ (of any desired length, see \cref{sec:bayesian}), which $M^q$ fails to predict. While $x$ (and hence, the point mass prior concentrated on it) is limit-computable, it costs an extreme amount of time, even compared to some useful approximations of $M^q$. For a concrete comparison, note that we can approximate $M^q$ by simulating all possible programs $p$ as above, allocating a fraction $2^{-n}$ of our computation time to each program prefix of length $n$. This approximation is competitive against every fast and short expert, with a runtime overhead ``only'' exponential in the expert's length. In contrast, to approximate $M^q$ well enough to produce an adversarial $x$, we must wait for even the slowest programs of the desired length to halt. The runtime is much worse than exponential; it grows faster than any computable function.

In general, given an anytime algorithm $p$ and a time limit $t$,
let $\nu_{p,t}$ denote the prior semimeasure that results from running $p$ for $t$ time steps.
Since $\nu_{p,t}$ is computable, there exists some $r := r(t)$
such that $\mu_{rp} = \nu_{p,t}$:
$r$ encodes the time limit,
along with some wrapper code to run the subsequent algorithm $p$ until the specified time $t$, and normalize its output into a semimeasure.
Applying \cref{thm:oraclesolregret,thm:prefixinformation} to $\mu_{rp}$ yields the regret bound
\begin{equation}
\label{eq:anytimebound}
L(M^q,\,x) - L(\nu_{p,t},\,x) \leqp \MI^q(rp : x)
\leqp |r| + \MI^q(p : x).
\end{equation}

By setting $t$ to the $k$th prefix-free busy beaver number \citep{bienvenu2012random}, we can get $|r|\eqp k$.
As the busy beaver numbers grow faster than any computable function,
an anytime algorithm can only substantially increase
the right-hand side of \labelcref{eq:anytimebound} after an unfathomable amount of time.\footnote{%
We are interested in times $t$ at which $\nu_{p,t}$, or at least $L(\nu_{p,t},\,x)$, has permanently settled to good enough approximations of its limit as $t\rightarrow\infty$; hence, no generality is lost by increasing $t$ to the next prefix-free busy beaver number. These numbers are defined relative to a particular Turing machine,
but another self-contained way to
specify a similarly large time limit
would be to
describe a particular Turing machine that runs
for a large number of steps before halting.
(The program $rp$ would alternate between simulating steps of this machine and steps of $p$, until the machine halts.)
The longest-running 5-state machine takes 47,176,870 steps before halting \citep{bb5}.
With 6 states, a machine is known that runs for over $2 \uparrow \uparrow \uparrow 5$ steps \citep{bb6-lb}, the kind of number for which ``the number of atoms in the Universe times the number of femtoseconds since the Big Bang'' is a rounding error.
An arbitrary $k$-state Turing machine
takes $O(k \log k)$ bits to specify,
giving $|r|=O(k \log k)$.
}
This increase is more than a proof artifact: the sheer scale of busy beaver numbers makes them informative about some otherwise undecidable problems \citep{chaitin1987computing}.

\section{Additional considerations regarding oracle Solomonoff induction} \label{sec:osiproperties}
\subsection{The reference universal computer}
Fundamentally, the information vantage point consists of not only the oracle $q$, but the entire relativized machine $U^q$. In the case of a finite prefix $q_{<n}$ padded with $\mathbf 0:=\texttt{0}^\infty$, the oracle $q:=q_{<n}\mathbf 0$ can be hardcoded into the description of another universal computer $V$, such that $V^{\mathbf 0}=U^q$. Our ``oracle Solomonoff induction'' is then just classical Solomonoff induction with respect to $V$.

Nonetheless, it is useful to separate out a ``minimal'' $U$ to treat as a universal reference. The intuitive idea is as follows: while each person on Earth likely has a different vantage point, they are also likely to share some common knowledge. Provided that everyone can agree on a ``Schelling point'' machine $U$, we retain the freedom to write programs in our preferred languages, which translate onto $U$ via interpreters that we include in our personal instantiation of $q$. $U$ might be chosen to be accessible from physics \citep{deutsch2013computation,ozkural2015ultimate,janzing2018does,kolchinsky2020thermodynamic,wolpert2024implications,ebtekar2025foundations}, and/or tailored to minimize the hidden constants in our regret bounds \citep[Section 3.9]{li2019introduction}. By agreeing on $U$, we ensure that the hidden constants in our results have small universal bounds independent of $q,p,x$. The oracle $q$ is itself not a universal constant, so we explicitly account for its influence in our bounds.

What if multiple machines are common knowledge, say $U$, $V$, and $W$? Given $U$, there exists a constant instruction $r$, such that $U^{\bar n_1\ldots\bar n_k q}(r\bar ip):=T_{n_i}^{\bar n_1\ldots\bar n_k q}(p)$ for all $i,k,n_1,\ldots,n_k\in\nats$ and $p,q\in\bits^\infty$ with $i\le k$. Now set $\bar n_1,\bar n_2,\bar n_3$ to interpreters of $U,V,W$ in the language of $U$, meaning that $T_{n_1}=U,\,T_{n_2}=V,\,T_{n_3}=W$. Then, $M^{\bar n_1\bar n_2\bar n_3q}$, which is based on $U$, dominates the analogous semimeasures for the machines $V,W$, up to a constant factor that depends only on $|r|$. This becomes another universal constant that can be made small when selecting $U$; as long as we do this, the specific choice of $U$ matters little.

\subsection{Fresh restarting $M^{h\mathbf 0}(x)$ vs lifelong learning $M^{\mathbf 0}(hx)$}
Suppose we want to predict $x\in\bits^*$, using some prior history $h\in\bits^*$. We can take either (1) a \emph{fresh restarting} approach, directly predicting $x$ according to the semimeasure $M^{h\mathbf 0}(x)$ that accesses $h$ via the oracle; or (2) a \emph{lifelong learning} approach, predicting the concatenation $hx$ according to the semimeasure $M^{\mathbf 0}(hx)$, so that the predictor sees $h$ before $x$.

According to \Cref{thm:oraclesolregret}, these two approaches optimize different objectives against different classes of experts. While both can access $h$ while predicting $x$, a lifelong learner with sufficient experience must converge to the conclusion supported by $h$ \citep{solomonoff1978complexity}, whereas a freshly restarted learner must give substantial weight to other alternatives.

For example, suppose $h:=\texttt{0}^n$ for some large $n$, while the first bit of $x$ is $x_1:=\texttt{1}$. In order to achieve bounded regret against a straightforward ``always \texttt{0}'' expert, our predictions must converge as $h$ gets longer; that is, $\lim_{n\rightarrow\infty}M^{\mathbf 0}(\texttt{1}\mid \texttt{0}^n) = 0$. There is no need to hedge substantially against the possibility of seeing $x_1=\texttt{1}$, because we only compete against accessible experts; without knowing $n$, no expert that predicts the long string of \texttt{0}s will know to change to $\texttt{1}$ after exactly $n$ steps. In contrast, the fresh restarting learner must compete with experts that are \emph{only} evaluated on $x$, and must therefore hedge against both possibilities for $x_1$! Formally, $0<\inf_{h\in\bits^*}M^{h\mathbf 0}(\texttt{1})<\sup_{h\in\bits^*}M^{h\mathbf 0}(\texttt{1})<1$.

More generally, given a long sequence consisting of multiple episodes, predicting each new episode $e^{(i)}\in\bits^*$ with a freshly instantiated universal semimeasure $M^{e^{(1)}\ldots e^{(i-1)}\mathbf 0}(e^{(i)})$ enables \Cref{thm:oraclesolregret} to bound the regret separately per episode. The price is that the bound on \emph{total} regret is now multiplied by the number of episodes. If we want to minimize the total loss across all episodes, then the lifelong learning Solomonoff induction $M^{\mathbf 0}(e^{(1)}\ldots e^{(N)})=\prod_{i=1}^N M^{\mathbf 0}(e^{(i)}\mid e^{(1)}\ldots e^{(i-1)})$ is superior.

Prior work on Solomonoff induction generally focuses on the lifelong learning version, representing past episodes as history rather than a vantage point oracle. Its performance on future episodes can still be bounded, albeit with additional error terms \parencites{chernov2007algorithmic}[Section 8.3]{rathmanner2011philosophical}.

\subsection{Unknowability}

In challenging settings involving out-of-distribution generalization, we hope to improve our learning algorithms. Since $M^q$ gives an upper bound on the performance of all accessible predictors, estimating the performance of $M^q$ indicates how much room there may be for improvement.

For example, image classifiers often use spurious features, e.g.\ distinguishing a water bird from a land bird by the presence of water in the background \citep{group-dro}.
Is this behavior a failure of our learning algorithms,
or is it genuinely the best inference supported by our training data?
If $M^q$ successfully avoids mistakes that our classifier makes on out-of-distribution data,
then general-purpose algorithmic improvements may be possible.
On the other hand, if $M^q$ replicates the undesired behavior, then the flaw is not in the algorithm.
Instead, we should carefully examine what assumptions or additional data lead us to intuit better inferences than those made by $M^q$,
so that we can make the relevant information available to our learning algorithms.

Another application of reasoning about what
$M^q$ does not know is found in AI safety.
\citet{cohen2020pessimism,ebtekar2026golden}
propose techniques to help prevent superintelligent agents
from autonomously taking dangerous, novel courses of action,
based on the inherent uncertainty that even maximally-intelligent Solomonoff induction must maintain about its environment.

\subsection{Optimality in computable environments}

In \Cref{thm:oraclesolregret}, the universal predictor's regret bound holds for every observation sequence $x$, but depends on how much information the rival predictor $\mu_p$ has about $x$. There is a dual bound, which does not depend on the rival predictor, but instead depends on the complexity of the observation-generating process, henceforth called the \emph{environment}. If a computable probability measure $\mu_p$ generates $x$, then $\mu_p$ is also the Bayes-optimal predictor. Consequently, any mean regret bound that holds against $\mu_p$ will in fact apply against all rivals, computable or otherwise.

In this dual setting, observations are random variables, so optimality is measured with respect to the mean cross-entropy loss that a predictor $\nu$ accumulates in the first $t\in\nats$ steps:
\[L_t(\nu,\,\mu)
:= \E_{x\sim\mu} L(\nu,\,x_{\le t})
= -\E_{x\sim\mu}\log\nu(x_{\le t}).\]

\begin{corollary}[Optimality in computable environments]\label{cor:oraclesolcomputable}
For all $p,q\in\bits^\infty$ such that $\mu_p$ is a probability measure (i.e.\ not merely a semimeasure), and $t\in\nats$, we have
\begin{equation*}
L_t(M^q,\,\mu_p)-\inf_\nu L_t(\nu,\,\mu_p)
\leqp \KM^q(p)
.\end{equation*}
\end{corollary}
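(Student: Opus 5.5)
The plan is to reduce the statement to a pointwise regret bound and then average it over $x\sim\mu_p$.

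\textbf{Step 1: identify the infimum.} Since $\mu_p$ is a probability measure, Gibbs' inequality gives, for every predictor $\nu$ (a semimeasure),
\[
L_t(\nu,\mu_p)-L_t(\mu_p,\mu_p)=\E_{x\sim\mu_p}\log\frac{\mu_p(x_{\le t})}{\nu(x_{\le t})}\ge 0 .
\]
This holds because the restriction of $\nu$ to length-$t$ strings sums to at most $1$, while $\mu_p$ restricted to length-$t$ strings sums to exactly $1$, so the expression is a KL divergence plus a nonnegative normalization term. Hence $\inf_\nu L_t(\nu,\mu_p)=L_t(\mu_p,\mu_p)$, attained by $\nu=\mu_p$. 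The left-hand side therefore equals $\E_{x\sim\mu_p}\bigl[L(M^q,x_{\le t})-L(\mu_p,x_{\le t})\bigr]$.

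\textbf{Step 2: pointwise bound.} By \cref{thm:oraclesolregret} and \cref{thm:infobounds},
\[
L(M^q,x_{\le t})-L(\mu_p,x_{\le t})\leqp \MI^q(p:x_{\le t})\leqp \KM^q(p)
\]
for every $x$. The constant here is universal and does not depend on $x$ or $t$. Taking expectations over $x\sim\mu_p$ preserves the inequality, since $\mu_p$ is a probability measure and an additive constant stays a constant. This gives the result.

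\textbf{Step 3: check the expectation is well defined.} Only strings with $\mu_p(x_{\le t})>0$ contribute to the average. On those strings, $\log\frac{\mu_p(x_{\le t})}{M^q(x_{\le t})}$ is finite, because $M^q(x_{\le t})\ge 2^{-O(1)-t}>0$ by \cref{thm:infobounds}. The sum is finite because it ranges over at most $2^t$ strings.

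Step 1 is the main place where care is needed. It relies on $\mu_p$ being a genuine measure, so that $\mu_p$ is itself among the admissible rivals $\nu$ and the infimum is achieved. I would also note explicitly that semimeasure competitors cannot beat $\mu_p$, because their deficit in total mass only increases their loss. Apart from this, the argument is a direct averaging of the pointwise bound.
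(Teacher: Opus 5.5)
Your proposal is correct and follows essentially the same route as the paper: you take the pointwise bound $L(M^q,x)-L(\mu_p,x)\leqp \KM^q(p)$ from \cref{thm:oraclesolregret} and \cref{thm:infobounds}, average it over $x\sim\mu_p$, and use nonnegativity of the KL divergence against a semimeasure to show that no $\nu$ beats $\mu_p$ in expectation. The differences are only cosmetic: you identify the infimum first, whereas the paper bounds each $\nu$ separately and then takes the supremum, and your Step 3 finiteness check is left implicit in the paper.
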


\begin{proof}

\Cref{thm:oraclesolregret,thm:infobounds} together imply
\[L(M^q,\,x)-L(\mu_p,\,x)
\leqp \KM^q(p).\]
Taking $x$ to be the first $t$ bits generated by $\mu_p$, and taking expectations,
\[L_t(M^q,\,\mu_p)-L_t(\mu_p,\,\mu_p)
\leqp \KM^q(p).\]

Hence, for every semimeasure $\nu$ (computable or otherwise),
\begin{align*}
L_t(M^q,\,\mu_p)-L_t(\nu,\,\mu_p)
&\leqp \KM^q(p) + L_t(\mu_p,\,\mu_p)-L_t(\nu,\,\mu_p)
\\&= \KM^q(p) - \E_{x\sim\mu_p}\left[\log\frac{\mu_p(x_{\le t})}{\nu(x_{\le t})}\right]
\\&= \KM^q(p) - \kl{\mu_p(\cdot_{\le t})}{\nu(\cdot_{\le t})}
\\&\le \KM^q(p).
\end{align*}
In the last line, we used the fact that the Kullback-Leibler divergence is non-negative even when the second argument is a semimeasure; normalizing would only decrease its value.
Taking the supremum of both sides over all semimeasures $\nu$ completes the proof.
\end{proof}

We can compare the pair of optimality results for oracle Solomonoff induction: \Cref{thm:oraclesolregret} applies to all possible observations but has a slack equal to the information content of the rival, whereas \Cref{cor:oraclesolcomputable} applies against all rivals but has a slack equal to the relativized complexity of the stochastic process generating the observations. 

The latter framing is more common in learning theory, where one often starts with probabilistic assumptions and then seeks an optimal learner.
For example, a statistical mechanics perspective on learning assumes that the data is sampled from a ``teacher'' model, which is itself sampled from some prior, and then the student's prediction loss is averaged over the sampling of both the teacher and the data \citep{Seung1992Apr,Watkin1993Apr,Zdeborova2016Sep,Aubin2018}.
The computable world model of \Cref{cor:oraclesolcomputable} includes most such settings:
if the data $x\sim\mu_p$ consists of concatenated self-delimiting tokens sampled i.i.d.\ from a teacher model,
then $\KM^q(p)$ is approximately the model's complexity,
which is small for a wide variety of statistical models used in practice.

Despite its intuitive appeal, when it comes to general online learning, we find the framing of \Cref{cor:oraclesolcomputable} to be weak in both philosophical and practical terms.
The information vantage point cannot constrain the environment (as some of it is inaccessible until observed), so the relativized complexity $\KM^q(p)$ may well be high. This weakness is well-known, and historically contributed to skepticism regarding the efficacy of Solomonoff induction \citep{sterkenburg2016solomonoff}. In contrast, the information vantage point \emph{does} constrain which rival predictors are accessible to us; therefore, combining the physical Church-Turing thesis with \Cref{thm:noinfoexnihilo} results in a very small slack $\MI^q(p : x)$ for \Cref{thm:oraclesolregret}.

Another important difference is that \Cref{cor:oraclesolcomputable} assigns ``objective'' probabilities to the observations $x$, effectively splitting their complexity into a ``model part'' given by the description of $\mu_p$, and a ``noise part'' as $x$ is sampled from $\mu_p$. While this seems natural in a lot of settings, it leaves some ambiguity: for example, if many generating distributions are initially plausible, should we take $\mu_p$ to be (1) their Bayesian mixture, (2) the specific distribution that best fits observations, or (3) a point mass of probability one on the actual observed sequence? To avoid such ambiguities in the problem specification, we prefer to leave the task of probabilistic modeling up to the predictor. Indeed, the only probabilities referenced in \Cref{thm:oraclesolregret} are given by the predictors. Instead of being imposed externally, these probabilities emerge from inference calculations on the actual observation $x$, offering a universal subjective interpretation of probability \parencites{gillies2000philosophical}{rathmanner2011philosophical}[Part IV]{hajek2016oxford}.
In a sense, this is another view of the meta-NFL problem (\cref{thm:meta-nfl}): we cannot be sure that the world adheres to some structure, but we can understand our own predictive capabilities.

\subsection{Other loss functions}
An important limitation of \Cref{thm:oraclesolregret} is that it only applies to the log loss. The \emph{strong convexity} of the log loss enables the predictor to hedge when it is uncertain \citep{vovk2001competitive}. In contrast, many real settings require commitment to a specific action.

An important case is the 0-1 loss. Instead of probabilistic predictions, it demands a series of discrete predictions that the next bit $x_t$ will be either \texttt{0} or \texttt{1}, and the loss is simply the number of mistakes. Given a Bayesian prior $\nu$, it seems reasonable to choose the prediction that minimizes expected loss. For the 0-1 loss, this is the maximum a posteriori (MAP) estimate, so that
\begin{equation*}
L_\text{0-1}(\nu,\,x) :=
\sum_{i=1}^{|x|} \indic(\nu(x_{<i}x_i) \le \nu(x_{<i}\neg x_i)).
\end{equation*}

Unfortunately, the adversarial example of \citet{putnam1963degree} (also see \cref{sec:bayesian}) returns in full force to produce a sequence $x$ on which this strategy -- and in fact, \emph{any} strategy -- predicts every bit incorrectly. If the prior is $M^q$, no computable expert can predict this $x$; nonetheless, an always-\texttt{0} or always-\texttt{1} ``expert'' would guess correctly at least half of the time, resulting in a large regret. Since these experts are simple, \Cref{thm:infobounds} implies $\MI^q(p:x)\eqp 0$, so the bound in \Cref{thm:oraclesolregret} does not hold.

On the other hand, this adversarial $x$ does not seem like a ``natural'' sequence that could arise, for example, by sampling from a computable distribution. Its construction depends on the incomputable semimeasure $M^q$, which is related to the \emph{Turing jump} $q'$ of $q$ \citep{nies2009computability,downey2010algorithmic}. We speculate that it may be possible to bound the regret for general loss functions in terms of $\MI^q(p\oplus q':x)$, leaving the precise expression and analysis to future work.

\section{No Free Lunch and complexity measures} \label{sec:nfl-slt}

No Free Lunch means that any generalization bound we prove must depend on an inductive bias. How can we understand the inductive bias implicit in some classical statistical learning results?

\subsection{Uniform convergence}
We will first review some relevant textbook results.
More details are available from many sources,
such as the recent book by
\citet[Section 4.5]{ltfp}.

Consider a learning problem
where we observe independent data $Z = (z_1, \dots, z_m) \sim \dist^m$
-- i.e.\ the $z_i$ are i.i.d. from $\dist$ --
and wish to identify a hypothesis $h \in \cH$
having small risk $L_\dist(h) = \E_{z \sim \dist} \ell(h, z)$,
often based on the empirical risk
$L_Z(h) = \frac{1}{m} \sum_{i=1}^m \ell(h, z_i)$.
In typical prediction settings, we have
$z = (x, y)$
and $\ell(h, (x, y))$
might be $\indic(h(x) \ne y)$,
$\norm{h(x) - y}^2$,
or $- \log( h(x) \cdot y)$.

A primary tool of statistical learning theory is
\emph{uniform convergence},
based on
the trivial inequality
\begin{equation} \label{eq:uniform-conv}
    L_\dist(\hat h)
    \le L_Z(\hat h) + \sup_{h \in \cH}[ L_\dist(h) - L_Z(h) ]
.\end{equation}
If we have a reasonable upper bound on
$\sup_{h \in \cH}[ L_\dist(h) - L_Z(h) ]$,
then ``what you see is what you get'':
predictors $h$ with low training loss $L_Z(h)$ will actually generalize well (have small $L_\dist(h)$).

For an empirical risk minimizer $\hat h \in \argmin_{h \in \cH} L_Z(h)$,
we further have for all $h^* \in \cH$ that
\[
    L_Z(\hat h) \le L_Z(h^*) = L_\dist(h^*) + \left( L_Z(h^*) - L_\dist(h^*) \right)
.\]
For any fixed $h^*$,
we can apply a simple concentration inequality\footnote{%
    Many sources instead bound $\sup_{h \in \cH} \abs{L_\dist(h) - L_Z(h)}$, and so do not need separate control of $L_Z(h^*) - L_\dist(h^*)$.
    While perhaps conceptually simpler,
    the description here makes clear that we only need to upper-bound $L_Z - L_\dist$ for $h^*$,
    while we need to upper-bound $L_\dist - L_Z$,
    or equivalently lower-bound $L_Z$,
    for all hypotheses.
    Thus asking for an upper bound everywhere is potentially wasteful.
    While with Rademacher complexity the gap in the bound is only a small constant,
    some methods \citep[e.g.][]{small-ball} can give much tighter lower bounds than upper bounds.
    The hardness result of \citet{vaish:uniform-failures},
    used to argue that ``uniform convergence may be unable to explain generalization in deep learning,''
    is also fundamentally about cases where $L_Z - L_\dist$ is high;
    no such result is possible in the one-sided case \citep[footnote 5]{zhou:uniform-interpolation}.
}
(such as Hoeffding's)
to show that $L_Z(h^*) - L_\dist(h^*)$ is small.
Then, if we choose that fixed $h^*$ to have loss 
(arbitrarily close to) $\inf_{h \in \cH} L_\dist(h)$,
we know $L_Z(\hat h)$
will not be much bigger than $\inf_{h \in \cH} L_\dist(h)$.
If we also have uniform convergence,
meaning the last term in \eqref{eq:uniform-conv} is known to be small,
then $\hat h$ will be nearly as good as the optimal predictor from $\cH$.

It is worth noting that uniform convergence \eqref{eq:uniform-conv}
gives only an upper bound on the generalization performance.
While uniform convergence occurs iff the problem is uniformly learnable
for binary classification \citep[e.g.][Theorem 6.7]{ssbd},
the same is not true in more general settings.
There are explicit constructions, for example, in
multiclass classification \citep[Section 6]{natarajan:some-results},
mean estimation with missing data \citep[Exercise 13.2]{ssbd},
and high-dimensional linear classification \citep[Section 3.1]{vaish:uniform-failures} and regression \citep[Sections 3.1 and 3.2]{zhou:uniform-interpolation},
where learning is possible but uniform convergence does not hold.\footnote{%
    The examples of \textcites[Theorem 6.7]{natarajan-dim}[Exercise 13.2]{ssbd}[Section 3.1]{zhou:uniform-interpolation} apply to one-sided uniform convergence,
    as well as two-sided.
    While \citeauthor{zhou:uniform-interpolation} left the one-sided failure as a conjecture, that conjecture is true when $d_S \ge 1$, as we now show.
    \endgraf
    The proof is identical to the two-sided case once we know
    $\E \lambda_{\max}(\Sigma - \hat\Sigma)$ from their Proposition B.2 (which they denote $\rho$)
    is $\Omega(\sqrt{\lambda_n / n})$.
    To show this, fix a unit vector $u \in \reals^{d_S}$
    and let $q = - X_J\tp X_S u / \norm{X_J\tp X_S u} \in \reals^{d_J}$.
    Consider representing $\Sigma - \hat\Sigma$
    in a basis beginning with
    the orthonormal vectors $(u, 0)$ and $(0, q)$;
    the upper-left $2 \times 2$ submatrix in that basis is
    \[
    A :=
    \begin{bmatrix} u\tp \bigl( I_{d_S} - \frac1n X_S\tp X_S \bigr) u & u\tp \bigl(-\frac1n X_S\tp X_J \bigr) q \\ q\tp \bigl(-\frac1n X_J\tp X_S \bigr) u  & q\tp \left( \frac{\lambda_n}{d_J} I_{d_J} - \frac1n X_J\tp X_J \right) q \end{bmatrix}
    = \begin{bmatrix} 1 - \frac1n \norm{X_S u}^2 & \frac1n \norm{X_J\tp X_S u} \\ \frac1n \norm{X_J\tp X_S u} & \frac{\lambda_n}{d_J} - \frac1n \norm{X_J q}^2 \end{bmatrix}
    .\]
    The largest eigenvalue of $\Sigma - \hat\Sigma$
    is at least as large as $\lambda_{\max}(A)$.
    Let $Y = \norm{X_S u}$.
    Since $X_J X_J\tp \to \lambda_n I$ as $d_J \to \infty$,
    we have $\norm{X_J\tp X_S u}^2 \to \lambda_n Y^2$
    and $\norm{X_J q}^2 = \norm{X_J X_J\tp X_S u}^2 / \norm{X_J\tp X_S u}^2 \to \lambda_n$,
    with all convergences almost sure;
    thus $A \to
    M := \begin{bmatrix} 1 - \frac1n Y^2 & \frac1n \sqrt{\lambda_n} Y \\ \frac1n \sqrt{\lambda_n} Y & -\frac1n \lambda_n \end{bmatrix}$.
    As
    $\lambda_{\max}\left(\begin{bmatrix} \alpha & \beta \\ \beta & \gamma \end{bmatrix}\right) = \frac12 \left(\alpha + \gamma + \sqrt{(\alpha - \gamma)^2 + 4 \beta^2}\right) \ge \frac12(\alpha + \gamma) + \abs\beta$,
    we have
    \[
        \lambda_{\max}(\Sigma - \hat\Sigma)
        \ge \lambda_{\max}(A)
        \to \lambda_{\max}(M)
        \ge \frac12 \left(1 - \frac1n Y^2 \right) - \frac{\lambda_n}{2n} + \frac{\sqrt{\lambda_n}}{n} Y
    .\]
    Since $u$ is a unit vector and $X_S$ is standard normal,
    $Y^2$ is $\chi^2(n)$;
    hence $(1 - \frac1n Y^2)$ has mean zero.
    $Y$ is $\chi(n)$,
    with mean
    $\sqrt{n - 1} + O(1/n) = \sqrt{n} + O(1/\sqrt{n})$;
    thus $\E \sqrt{\lambda_n} Y / n
    = \sqrt{\frac{\lambda_n}{n}} \left( 1 + O(1 / n) \right)$.
    Recalling that in the problem setup it was assumed $\lambda_n = o(n)$,
    we have shown that
    \[
        \E \lambda_{\max}(M)
        \ge
        \sqrt{\frac{\lambda_n}{n}} \left(
            - \frac12 \sqrt{\frac{\lambda_n}{n}}
            + 1
            + O\left( \frac{1}{n} \right)
        \right)
        = \Omega\left( \sqrt{\frac{\lambda_n}{n}} \right)
    .\]
    Following the proof of their Theorem 3.2, we thus have
    $\displaystyle \lim_{n \to \infty} \lim_{d_J \to \infty} \E \sup_{\norm w \le \norm{\hat{w}_\mathrm{MN}}} \left[L_\dist(w) - L_S(w)\right] = \infty$.
}

\subsection{Rademacher complexity}
To show uniform convergence,
we can use
\emph{Rademacher complexity}.
Letting $\sigma_1, \dots, \sigma_m$ independently follow a Rademacher distribution
$\operatorname{Uniform}(\{ -1, +1 \})$,
the (expected) Rademacher complexity of a set of real-valued functions $\cF$ is
\[
    \Rad_{\dist^m}(\cF)
    :=
    \E_{z_1, \dots, z_m \sim \dist} \E_{\sigma_1, \dots, \sigma_m}\left[
        \sup_{f \in \cF}
        \frac1m \sum_{i=1}^m \sigma_i f(z_i)
    \right]
.\]
This definition arises from a technique called symmetrization,
which yields \citep[Proposition 4.2]{ltfp}
\begin{equation} \label{eq:oneside-rad}
    \E_{Z \sim \dist^m} \sup_{f \in \cF} \Bigl[
        \E_{z \sim \dist}[f(z)]
        - \frac{1}{m} \sum_{i=1}^m f(z_i)
    \Bigr]
    \le 2 \Rad_{\dist^m}(\cF)
.\end{equation}

One can also use ``desymmetrization'' to show a closely related lower bound,
although the form is slightly more complex
\parencites[Section 2.2]{koltchinskii:local-rad}[Proposition 4.12]{wainwright}.

Plugging in $\cF = \{ z \mapsto \ell(h, z) : h \in \cH \}$
gives exactly a bound on $\E_{Z \sim \dist^m} \sup_{h \in \cH} [ L_\dist(h) - L_Z(h) ]$.
We can further obtain high-probability bounds, if $\ell$ is bounded,
by McDiarmid's inequality \citep[Section 4.4.1]{ltfp}.

Finally, we often bound
the complexity of a loss class
using the complexity of the hypothesis class.
If $h(x) \in \reals$
and $\abs{\ell(h, (x, y)) - \ell(h', (x, y))} \le G \abs{h(x) - h'(x)}$,
then Talagrand's contraction lemma \citep[Proposition 4.3]{ltfp}
implies that
$\Rad(\cF) \le G \Rad(\cH)$;
thus
\[
    \E_{Z \sim \dist^m} \sup_{h \in \cH} [L_\dist(h) - L_Z(h)]
    \le 2 G \Rad(\cH)
.\]

Let's turn for a moment to the problem of learning a linear function in some fixed feature space,
$\cH_\cW = \{ x \mapsto \inner{w}{\phi(x)} : w \in \cW \}$.
Choosing $\phi(x) = x$
covers standard linear predictors.
Choosing $\phi$ to map into some Hilbert space
(here assumed to be over the reals for simplicity)
corresponds to learning in a reproducing kernel Hilbert space (RKHS).

The usual bound for the Rademacher complexity of $\cH_\cW$ is
\begin{equation} \label{eq:linear-class-rad}
\begin{aligned}
     \Rad_{\dist^m}(\cH_\cW)
  &= \E \sup_{w \in \cW} \frac1m \sum_{i=1}^m \sigma_i \inner{w}{\phi(x_i)}
\\&= \E \sup_{w \in \cW} \inner*{\frac1m \sum_{i=1}^m \sigma_i \phi(x_i)}{w}
\\&\le \E \norm[\Big]{\frac1m \sum_{i=1}^m \sigma_i \phi(x_i)} 
    \left(\sup_{w \in \cW} \norm{w} \right)
    &&\text{by Cauchy-Schwarz}
\\&\le \frac{1}{\sqrt m} \sqrt{\E_{x \sim \dist} \norm{\phi(x)}^2} \left(\sup_{w \in \cW} \norm{w} \right)
    &&\text{by Jensen and $\E [\sigma_i \sigma_j] = \indic(i = j)$}
.\end{aligned}
\end{equation}
Since $\sup_{w \in \cW} \norm{w}$ appears so naturally here,
it is tempting to think that it plays a vital role:
low-norm functions are easy to learn,
while high-norm functions are hard.
This almost seems like a ``built-in'' inductive bias for learning functions from $\cH_\cW$.

This intuition is incorrect.

\begin{proposition}[{\citealp[Exercise 4.9]{ltfp}}] \label{thm:translate-rad}
    Let $\cH$ be any set of functions $\mathcal X \to \reals$,
    $f : \mathcal X \to \reals$ any function,
    and write $\cH + \{ f \} := \{ x \mapsto h(x) + f(x) : h \in \cH \}$.
    Then $\Rad_{\dist^m}(\cH + \{f\}) = \Rad_{\dist^m}(\cH)$.
\end{proposition}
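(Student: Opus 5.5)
The plan is to expand the definition of $\Rad_{\dist^m}(\cH + \{f\})$ and split off the contribution of $f$, which does not depend on the hypothesis. For fixed $z_1,\dots,z_m$ and $\sigma_1,\dots,\sigma_m$, every element of $\cH + \{f\}$ has the form $h + f$ with $h\in\cH$. Hence
\[
\sup_{g \in \cH + \{f\}} \frac1m \sum_{i=1}^m \sigma_i g(z_i)
= \sup_{h \in \cH} \Bigl[ \frac1m \sum_{i=1}^m \sigma_i h(z_i) + \frac1m \sum_{i=1}^m \sigma_i f(z_i) \Bigr]
= \sup_{h \in \cH} \frac1m \sum_{i=1}^m \sigma_i h(z_i) + \frac1m \sum_{i=1}^m \sigma_i f(z_i).
\]
The last equality holds because the second term does not depend on $h$, so it can be pulled out of the supremum.

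Next, take expectations over $\sigma$ conditional on the sample. The pulled-out term has conditional expectation $\frac1m \sum_i f(z_i)\,\E[\sigma_i] = 0$, since each $\sigma_i$ is symmetric and independent of $z$. Taking the outer expectation over $z\sim\dist^m$ then gives $\Rad_{\dist^m}(\cH + \{f\}) = \Rad_{\dist^m}(\cH)$.

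The only real obstacle is measure-theoretic: the expectations must be well defined, and linearity must be applied legitimately when values may be $+\infty$ or when $f(z_i)$ is not integrable. I would handle this by staying conditional on $z$. For fixed $z$, the $\sigma$-expectation is a finite average over $2^m$ sign patterns. The term involving $f$ is then a finite real number that cancels exactly under sign symmetry: pair each $\sigma$ with $-\sigma$, or simply note that $\sum_\sigma \sigma_i = 0$. This holds even when the $\cH$-supremum is $+\infty$, because adding a finite quantity does not change $+\infty$. So the conditional Rademacher averages of $\cH+\{f\}$ and $\cH$ agree pointwise in $z$, and the outer expectations over $z$ therefore agree, whatever convention is used for measurability of the supremum. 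No integrability of $f$ is needed.
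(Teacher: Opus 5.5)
Your proposal is correct and follows the same route as the paper's proof: the term $\sum_i \sigma_i f(z_i)$ does not depend on $h$, so it comes out of the supremum and vanishes in expectation because $\E \sigma_i = 0$. Your added care goes beyond the paper's one-line argument, which leaves this implicit, and it is valid. Conditioning on $z$ and averaging over the $2^m$ sign patterns makes the $f$-term cancel exactly, even when the supremum is $+\infty$ and without any integrability assumption on $f$.
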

\begin{proof}
    Since $f$ does not interact with $h$
    and $\E \sigma_i = 0$,
    we have
    \[
        \E\left[ \sup_{h' \in \cH + \{f\}}
        \sum_i \sigma_i h'(x_i) \right]
        = 
        \E \sup_{h \in \cH} \left[
        \sum_i \sigma_i h(x_i)
        + \sum_i \sigma_i f(x_i)
        \right]
        = 
        \E \sup_{h \in \cH} \left[
        \sum_i \sigma_i h(x_i)
        \right]
    .\qedhere\]
\end{proof}

In particular, translating the class is irrelevant: for any arbitrary $w_0$,
we have
$\Rad_{\dist^m}(\cH_\cW) = \Rad_{\dist^m}(\cH_{\cW - \{w_0\}})$.
Thus,
\[
    \Rad_{\dist^m}(\cH_\cW)
    = \inf_{w_0} \Rad_{\dist^m}(\cH_{\cW - \{w_0\}})
    \le \frac{1}{\sqrt m} \, \sqrt{\E \norm{\phi(x)}^2} \,
    \left(\inf_{w_0} \sup_{w \in \cW} \norm{w - w_0}\right)
.\]
That is, it is not the maximum norm but the \emph{radius} of $\cW$ that matters.
While our previous upper bound \eqref{eq:linear-class-rad} did indeed prefer low-norm predictors to high-norm ones,
that was an artifact of our proof.
In fact, we could have subtracted $w_0$ in \eqref{eq:linear-class-rad} in the first place,
``naturally'' yielding the radius.

This is exactly the picture of \cref{fig:circles}:
the Rademacher bound only cares about the size of each hypothesis class.
It does \emph{not} tell you which functions to prefer.
By \cref{thm:translate-rad},
this is not specific to linear classes;
\emph{any} function class can be translated by \emph{any} function without modifying the Rademacher complexity.

\subsection{Combinatorial dimensions for classification}
For binary classifiers,
where both upper and lower bounds for learnability are controlled by the VC dimension \parencites{vapnik-chervonenkis-1968}[Theorem 6.7]{ssbd},
an exactly analogous result holds.
There is no such thing as an ``inherently simple'' binary classifier;
it only matters how flexible the hypothesis class is.

\begin{proposition}[{\citealp[Exercise 3.25]{mrt}}] \label{thm:vc-xor}
    Define the operation
    $a \xor b := \indic(a \ne b)$ for $a, b \in \{0, 1\}$.
    Write $\cH \xor f := \{ x \mapsto h(x) \xor f(x) : h \in \cH \}$.
    Then, $\VCdim(\cH \xor f) = \VCdim(\cH)$.
\end{proposition}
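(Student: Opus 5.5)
The plan is to show that XOR with a fixed function $f$ acts as a bijection on labelings of any finite point set, so shattering is preserved in both directions. Fix a finite set $S = \{x_1,\dots,x_d\} \subseteq \cX$ and consider the restriction map
\[
\cH|_S := \{ (h(x_1),\dots,h(x_d)) : h \in \cH \} \subseteq \{0,1\}^d .
\]
Similarly, $(\cH \xor f)|_S = \{ (h(x_i) \xor f(x_i))_{i=1}^d : h \in \cH \}$. Let $v := (f(x_1),\dots,f(x_d))$. Then $(\cH\xor f)|_S$ is exactly the image of $\cH|_S$ under the map $\tau_v : u \mapsto u \xor v$, with the XOR taken coordinatewise.

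The key step is to note that $\tau_v$ is a bijection of $\{0,1\}^d$ onto itself. Indeed, it is an involution, since $(u \xor v)\xor v = u$ because $a \xor b \xor b = a$ on bits. Hence $\abs{(\cH\xor f)|_S} = \abs{\cH|_S}$. In particular, $\cH$ shatters $S$, i.e.\ $\cH|_S = \{0,1\}^d$, if and only if $(\cH\xor f)|_S = \tau_v(\{0,1\}^d) = \{0,1\}^d$, i.e.\ if and only if $\cH \xor f$ shatters $S$.

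Since the VC dimension is the supremum of $\abs S$ over shattered finite sets $S$, and the two classes shatter exactly the same sets, their VC dimensions coincide. This also covers the infinite case, where the supremum is $\infty$. Alternatively, one can observe that $(\cH \xor f)\xor f = \cH$, so it suffices to prove $\VCdim(\cH\xor f) \ge \VCdim(\cH)$ and then apply the same inequality to the class $\cH\xor f$.

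There is no real obstacle here. The only point needing care is that $\tau_v$ depends on $S$ through $v$. Shattering is checked one set at a time, however, so a set-dependent bijection suffices. This mirrors \cref{thm:translate-rad}: translating by $f$ (now in the group $(\ints_2)^{\cX}$) leaves the complexity measure unchanged.
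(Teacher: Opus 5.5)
Your proposal is correct and is essentially the paper's argument: on any finite set, XOR with the fixed label vector of $f$ is a bijection on labelings, so $\cH$ shatters a set iff $\cH \xor f$ does. Your remarks that the bijection may depend on the set, and that the argument covers the infinite-dimensional case, are fine additions that the paper leaves implicit.
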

\begin{proof}
    Let $h_1, \dots, h_N \in \cH$
    be hypotheses
    achieving the vectors of labels $y_1, \dots, y_N \in \{0, 1\}^{\abs X}$
    for the points in a set $X$.
    Write $y$ for the vector of labels achieved by $f$ on $X$.
    Then $h_1 \xor f, \dots, h_N \xor f$
    achieve the labels $y_1 \xor y, \dots, y_N \xor y$,
    with $\xor$ elementwise.
    As xor is a bijection,
    the number of labelings in each set is the same;
    thus $\cH$ shatters $X$ iff $\cH \xor f$ does.
\end{proof}

The same is true for multiclass classification,
where learnability is jointly characterized by the Natarajan and DS dimensions \citep{cohen:agnostic-multiclass}.
\begin{proposition}
    Let $\cH$ be a class of functions from $\cX$ to $\cY \subseteq \nats$.
    For each $x \in \cX$, let $\pi_x$ be an arbitrary bijection on $\cY$,
    and let $\pi \circ \cH = \{ x \mapsto \pi_x(h(x)) : h \in \cH \}$.
    Then the Natarajan dimensions \citep{natarajan-dim} of $\cH$ and $\pi \circ \cH$ are equal;
    the same is true for the DS dimension \citep{ds-dim}.
\end{proposition}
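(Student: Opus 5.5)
The plan is to follow the template of \cref{thm:vc-xor}. The map $h \mapsto \pi \circ h$, where $(\pi\circ h)(x) := \pi_x(h(x))$, is a bijection from $\cH$ onto $\pi\circ\cH$; its inverse is $\pi^{-1}\circ\cdot$, and it acts independently on each coordinate $x$. Because $\pi^{-1}=\{\pi_x^{-1}\}$ is another family of bijections with $\pi^{-1}\circ(\pi\circ\cH)=\cH$, it will suffice to prove one inequality, say $\dim(\pi\circ\cH)\ge\dim(\cH)$ for each dimension. The reverse inequality then follows by applying the same argument to $\pi\circ\cH$ and $\pi^{-1}$.

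\textbf{Natarajan dimension.} Recall that $\cH$ N-shatters $S=\{x_1,\dots,x_d\}$ if there exist $f_0,f_1:S\to\cY$ with $f_0(x)\ne f_1(x)$ for every $x\in S$, such that for every $T\subseteq S$ some $h\in\cH$ agrees with $f_1$ on $T$ and with $f_0$ on $S\setminus T$. Given such witnesses for $\cH$, I will define new witnesses $f_b'(x):=\pi_x(f_b(x))$ for $b\in\{0,1\}$. Since each $\pi_x$ is injective, $f_0'(x)\ne f_1'(x)$ still holds. The hypothesis $\pi\circ h$ then realizes the pattern for $T$, because $\pi_x(h(x))=\pi_x(f_b(x))$ holds exactly when $h(x)=f_b(x)$. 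Hence $\pi\circ\cH$ N-shatters $S$.

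\textbf{DS dimension.} Recall that $S$ is DS-shattered if the set of restrictions $\cH|_S\subseteq\cY^d$ contains a nonempty finite \emph{pseudo-cube} $C$. A pseudo-cube is a set in which every $y\in C$ and every coordinate $i$ admit some $y'\in C$ differing from $y$ exactly in coordinate $i$. The coordinatewise map $\Pi(y)_i:=\pi_{x_i}(y_i)$ is a bijection of $\cY^d$. It preserves the relation ``differs exactly in coordinate $i$,'' because each $\pi_{x_i}$ is injective and the coordinates are transformed independently. Consequently $\Pi(C)$ is again a finite nonempty pseudo-cube. Moreover $\Pi(C)\subseteq\Pi(\cH|_S)=(\pi\circ\cH)|_S$, so $\pi\circ\cH$ DS-shatters $S$.

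The argument has no real obstacle; the work lies in stating the definitions precisely. The one point needing care is that the bijection must act coordinatewise with a permutation $\pi_x$ depending only on $x$. This is exactly what makes both ``differs in coordinate $i$'' and ``agrees with $f_b$ at $x$'' invariant under $\Pi$. If $\cY$ is infinite, the same reasoning goes through unchanged, since only injectivity and surjectivity of each $\pi_x$ are used.
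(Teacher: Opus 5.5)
Your proof is correct and takes the same approach as the paper, whose entire proof is that the argument of \cref{thm:vc-xor} carries over once VC shattering is replaced by Natarajan or DS shattering. You supply the details the paper leaves implicit: transporting the witnesses $f_0,f_1$ through each $\pi_x$, observing that the coordinatewise bijection $\Pi$ preserves $i$-neighbors and hence pseudo-cubes, and using $\pi^{-1}$ for the reverse inequality.
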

\begin{proof}
    The proof proceeds exactly as that of \cref{thm:vc-xor},
    but using the concepts of Natarajan shattering
    or DS shattering instead of VC shattering.
\end{proof}

\subsection{PAC-Bayes}
Another major technique for proving generalization
is PAC-Bayes bounds;
\citet{alquier:pac-bayes} gives a recent overview.
In these bounds, there is a \emph{prior}
and a \emph{posterior} distribution over predictors,
although these need not be related by the rules of Bayesian inference.
There are many forms of PAC-Bayes bounds,
but in general,
they bound (the mean of) the random variable $L_\dist(h) - L_Z(h)$,
for $h$ sampled from the posterior distribution,
in terms of the KL divergence of the posterior from the prior.

Such bounds also satisfy a similar invariance to the actual form of the hypotheses,
as the KL divergence is invariant to any invertible transformation.
If we transform the prior and posterior in the same way,
we obtain the same bound on the expected generalization gap.

\subsection{Stability}
Probably the only other widely-used technique for proving generalization bounds is \emph{algorithmic stability};
this can apply outside of settings with uniform convergence.
For instance,
the most common definition is uniform stability:
\begin{definition}[\citep{uniform-stability,uniform-stability-random}]
    When $Z = (z_1, \dots, z_m)$,
    let $Z_{\lnot i}$ denote $(z_1, \dots, z_{i-1}, z_{i+1}, \dots, z_m)$.
    A (possibly randomized) learning algorithm $\mathcal A$ is
    \emph{$\beta(m)$-uniformly stable} if for all $m \ge 1$,
    \[
        \sup_{\substack{Z \in \mathcal Z^m, \, i \in [m] \\ z \in \mathcal Z}}
        \abs{\E_{\mathcal A} \ell(\mathcal A(Z_{\lnot i}), z) - \E_{\mathcal A} \ell(\mathcal A(Z), z)}
        \le \beta(m)
    .\]
    We say an algorithm is \emph{uniformly stable} if $\lim_{m \to \infty} \beta(m) = 0$.
\end{definition}
Uniformly stable algorithms bear guarantees on $\E_{\mathcal A} \left[L_\dist(\mathcal A(Z)) - L_Z(\mathcal A(Z))\right]$ in terms of $\beta(m)$ \citep{uniform-stability,uniform-stability-random,sharper-uniform-stability}.
Once again, nothing in this definition privileges any particular form of hypothesis; it only matters how stable the algorithm's output is to small changes in the training set.
The same is true for various other notions of stability \citep{average-stability}.

\section{Epistemic Boltzmann brains} \label{sec:boltmzann-brains}

Cosmology is the branch of physics that studies the large-scale structure of the Universe. One criterion for evaluating a proposed model of the Universe is whether it predicts an overabundance of so-called \emph{Boltzmann brains}. These are observers that arise from chance fluctuations in a thermal background, instantiating a brief moment of subjective experience -- complete with illusory memories of an orderly past -- before dissipating back into equilibrium. In models where Boltzmann brains (BBs) vastly outnumber ordinary observers (OOs) like ourselves, it is argued that by simple counting, we must conclude that we are almost certainly BBs. Since this conclusion would completely undermine the scientific method, it is argued that we should reject such models \citep{barrow1987anthropic,de2010boltzmann,carroll2020boltzmann}.

In this section, we argue a contrary view: the problem of BBs is not specific to particular models of the cosmos, but is in fact a kind of No Free Lunch theorem. Even if our Universe contains no BBs at all, \textbf{without an inductive bias, we are forced to conclude that we are probably Boltzmann brains}. On the other hand, \citet{hutter2010complete,muller2020law,muller2026algorithmic} show that with a universal inductive prior, even observers who know their Universe is full of BBs will view themselves as OOs, and make ordinary inferences. Hence, there is no reason to judge a model by its prevalence of BBs.

We make four versions of the argument, first from a position of total epistemic ignorance, adding more generous assumptions at each subsequent iteration. 

\subsection{Total epistemic ignorance}
Ideally, we should be able to make inferences about our surrounding world without any prior assumptions. Through our biological senses, we take a sequence of observations, which we hope to extrapolate into predictions about future observations. Unfortunately, without an inductive bias, the No Free Lunch Theorem applies exactly as stated in \Cref{lem:nfl}. That is, no matter what observations we have accumulated so far, it remains equally plausible to receive any sequence of future observations. Thus, with overwhelmingly high subjective probability, we should expect any apparent structure in our past to dissolve immediately, just as for a BB.

\subsection{Knowledge of dynamics}
Now we add the assumption that we know the dynamical laws of the physical Universe, but start with no knowledge of its particular state. \citet{wolpert2025disentangling} argue that concluding we are BBs from such an assumption would be circular and contradictory, because a BB cannot scientifically infer the laws of physics. Nonetheless, it should not \emph{hurt} us to assume we are endowed with knowledge of the laws, as if by divine inspiration.

Without an inductive bias, we may take a maximum entropy prior (which we assume is well-defined) over the possible states of the Universe. If we do, then our subjectively held Bayesian belief is identical to the large-scale equilibrium distribution of the Universe, known as \emph{heat death}. By the second law of thermodynamics, the time-evolution of our belief distribution remains at maximum entropy at all future times. Thus, even if the Universe is objectively quite orderly, our belief remains equivalent to a BB at heat death, unable to make useful inferences about the state of our surroundings. Before expanding on some details of this argument, we add an even more generous assumption.

\subsection{Control of dynamics}
Now we add the assumption that we can choose the dynamical laws at will, subject to obeying the second law of thermodynamics, i.e.\ that entropy cannot decrease. In other words, while the previous argument assumed we were divinely inspired by knowledge of physics, here we assume that we are divinely endowed with the ability to \emph{control} physics as we see fit. We will see that, even granting such unrealistic powers, it remains impossible to make useful inferences about our surroundings.

At first, this may seem absurd. If we want to learn the state of our environment, surely we can just look at it and record its value to our memory; or failing that, we apply our physics-bending powers to \emph{make} the environment match our beliefs! The issue is that the second law of thermodynamics does not play well with maximum entropy priors.

To illustrate, consider an ordered pair $(m_t,e_t)$ representing the state of a memory and an environment variable, respectively, at time $t$. For simplicity, we model the states as discrete. In order to measure $e_t$, we might like to implement the assignment operation $m_{t+1}:= e_t$, which can also be represented as a map
\[(m,\,e)\mapsto (e,\,e).\]

However, this map is irreversible, and decreases the entropy of the joint system. Nature forbids irreversible operations in isolation; in reality, such measurements are only possible by dumping the excess entropy into a third system that is known to have low entropy \citep{bennett1982thermodynamics}. One valid implementation first initializes the memory to zero by dissipating entropy into a heat bath, after which we can implement the reversible (i.e.\ injective) map
\[(0,\,e)\mapsto (e,\,e).\]

Unfortunately, a maximum entropy prior assigns equal probability to every possible joint state of the Universe. Since every transformation that obeys the second law necessarily preserves the maximum entropy, our belief is unchanged by such transformations. Consequently, there is no way to be confident that we have set the memory to zero, let alone infer the state of anything else.

\subsection{Control of dynamics \emph{and} initialized memory}
It seems strange to imagine not knowing the state of our own memory, since by definition the memory should be used to record our beliefs. In this final version of the argument, we start with some memory of a fixed size, say 1 GB, containing known initial values, a large fraction of which are zeros. Our prior is maximum entropy on everything in the Universe except for this memory, resulting in a total entropy that is 1 GB less than the maximum possible.

In this case, we can indeed perform some measurements, after which our memory's state becomes correlated with the environment. However, by the second law of thermodynamics, our belief distribution will never be more than 1 GB away from heat death. Converted to physical units, this gap is less than $\SI{e-13}{\joule\per\kelvin}$ \citep{frank2005indefinite,ebtekar2025foundations}, so our observations can never assure us of large-scale structure sufficient to extract a macroscopically useful quantity of work. Normally, we make large-scale inferences by generalizing from smaller amounts of data; in contrast, the near-maximum entropy prior forces us to conclude the smallest possible correlation consistent with our measurements. Once again, we are effectively trapped as epistemic BBs.

As a final note, \citet{wolpert2025disentangling} propose that we should believe a maximum entropy distribution on the trajectory of the Universe, conditional on two snapshots of its macrostate: the Big Bang, and the present day. We take the position that their procedure is unjustified, since there is no direct way to obtain those snapshots. Instead, scientific inferences about the Universe are more readily justified via (approximate) universal induction on real observations \citep{hutter2010complete,muller2020law,muller2026algorithmic}.

\end{document}